\newcommand*\circled[1]{\tikz[baseline=(char.base)]{
            \node[scale=0.8, shape=circle,draw,inner sep=2pt] (char) {#1};}}
\renewcommand{\refname}{References}
\renewcommand{\bibsection}{%
   \subsubsection*{\refname%
            \@mkboth{\MakeUppercase{\refname}}{\MakeUppercase{\refname}}%
   }
}
\definecolor{mydarkgreen}{RGB}{39,130,67}
\newcommand{\green}{\color{mydarkgreen}}
\definecolor{mydarkred}{RGB}{192,47,25}
\newcommand{\red}{\color{mydarkred}}
\definecolor{darkblue}{rgb}{0.0,0.0,0.65}
\definecolor{darkred}{rgb}{0.68,0.05,0.0}
\definecolor{darkgreen}{rgb}{0.0,0.29,0.29}
\definecolor{darkpurple}{rgb}{0.47,0.09,0.29}
\newcommand{\partB}[1]{{\color{darkblue}#1}}
\newcommand{\greencheckmark}{\green\checkmark}
\definecolor{mydarkblue}{rgb}{0,0.08,0.45}
\crefname{equation}{Eq.}{equations}
\Crefname{equation}{Equation}{equations}
\crefname{table}{Tab.}{tables}
\Crefname{table}{Table}{tables}
\crefname{figure}{Fig.}{figures}
\Crefname{figure}{Figure}{figures}
\crefname{algorithm}{Alg.}{algorithms}
\Crefname{algorithm}{Algorithm}{algorithms}
\crefname{section}{Sec.}{sections}
\Crefname{section}{Section}{sections}
\newcommand{\grad}{\nabla}
\newcommand{\dual}[1]{#1^\ast}
\newcommand{\innerp}[1]{\langle#1\rangle}
\newcommand{\domain}{\mathcal W}
\newcommand{\constset}{\mathcal K}
\DeclareMathOperator{\lmo}{\textsc{lmo}}
\DeclareMathOperator*{\argmin}{arg\,min}
\DeclareMathOperator{\prox}{\mathrm{prox}}
\DeclareMathOperator{\dist}{dist}
\newcommand{\Expect}{\mathbb{E}}
\newcommand{\N}{\mathbb{N}}
\newcommand{\R}{\mathbb{R}}
\newcommand{\betak}{{\beta_{k}}}
\newcommand{\betakminusone}{{\beta_{k-1}}}
\DeclareMathOperator{\Tr}{Tr}
\newcommand{\defeq}{\triangleq}
\newcommand{\ellone}[1]{\|#1\|_1}
\newcommand{\elltwo}[1]{\|#1\|_2}
\newcommand{\norm}[1]{\|#1\|}
\DeclarePairedDelimiter{\floor}{\lfloor}{\rfloor}
\newcommand{\optimal}[1]{#1^{\star}}
\newcommand{\bigO}{\mathcal O}
\newcommand{\proj}{\mathrm{proj}}
\newcommand{\ip}[2]{\langle{#1},{#2}\rangle}
\newcommand{\bb}{b}
\newcommand{\x}{x}
\newcommand{\y}{y}
\newcommand{\z}{z}
\newcommand{\w}{w}
\newcommand{\uu}{u}
\newcommand{\vv}{v}
\newcommand{\W}{W}
\newcommand{\C}{C}
\newcommand{\X}{X}
\newcommand{\A}{A}
\newcommand{\va}{a}
\newcommand{\vx}{x}
\newcommand{\gideon}[1]{{{\color{red}[\textit{gideon}: #1]}}}
\declaretheorem[name=Theorem,numberwithin=section]{thm}
\declaretheorem[name=Corollary,numberwithin=section]{corollary}
\declaretheorem[name=Lemma,numberwithin=section]{lemma}
\newcommand{\xmark}{\ding{55}}
\newcommand{\redx}{\red\large\xmark}
\newcommand{\Prob}{\hyperref[main:template_def]{Problem~(1)}\xspace}
\begin{document}

\runningtitle{Faster One-Sample SCGM for Composite Convex Minimization}

\runningauthor{Dresdner, Vladarean, R\"atsch, Locatello, Cevher, Yurtsever}

\twocolumn[
\aistatstitle{Faster One-Sample Stochastic Conditional Gradient Method for Composite Convex Minimization}
\aistatsauthor{Gideon Dresdner \And Maria-Luiza Vladarean \And Gunnar R\"atsch }%
\aistatsaddress{ETH Z\"urich, Switzerland \And EPFL, Switzerland  \And ETH Z\"urich, Switzerland}

\aistatsauthor{Francesco Locatello \And Volkan Cevher \And Alp Yurtsever}
\aistatsaddress{Amazon Web Services \And EPFL, Switzerland \And Ume\aa\ University, Sweden}
]

\begin{abstract}

We propose a stochastic conditional gradient method (CGM) for minimizing convex finite-sum objectives formed as a sum of smooth and non-smooth terms. Existing CGM variants for this template either suffer from slow convergence rates, or require carefully increasing the batch size over the course of the algorithm's execution, which leads to computing full gradients. In contrast, the proposed method, equipped with a stochastic average gradient (SAG) estimator, requires only one sample per iteration. Nevertheless, it guarantees fast convergence rates on par with more sophisticated variance reduction techniques. In applications we put special emphasis on problems with a large number of separable constraints. Such problems are prevalent among semidefinite programming (SDP) formulations arising in machine learning and theoretical computer science. We provide numerical experiments on matrix completion, unsupervised clustering, and sparsest-cut SDPs.
 \end{abstract}

\section{INTRODUCTION}

Consider the following composite finite-sum template:
\begin{equation}\label{main:template_def}
    \min_{\w\in\domain}
    \left\{F(\w)\defeq \frac{1}{n}\sum_{i=1}^n f_i(\x_i^T \w) + g(\A\w)\right\}.
\end{equation}
$\domain \subset \mathbb R^d$ is a compact and convex set, each $f_i:\mathbb R\to\mathbb R$ is convex and $L_f$-smooth (i.e., its derivative is Lipschitz continuous with constant $L_f$), $A$ is an~$m \times d$ matrix, and $g: \mathbb R^m \to\mathbb R\cup\{+\infty\}$ is convex but possibly non-smooth. The function $g(\A\w)$ can capture constraints of the form $\A\w = b$ (or $\A\w \in \mathcal{K}$, for closed, convex sets $\mathcal{K} \subseteq \mathbb{R}^m$) via indicator functions $\delta_{\{b\}}$ (resp.,~$\delta_{\mathcal{K}}$ which takes $0$ for all points in $\mathcal{K}$ and $+\infty$ everywhere else). Throughout, we assume that $g$ is either Lipschitz continuous or an indicator function.

We study conditional gradient methods (CGM, also known as the Frank-Wolfe Algorithm) tailored for \Prob. 
For computational efficiency, we suppose linear minimization over $\domain$ is easy. We separately focus on two specific settings of $g$: 
\begin{enumerate}[leftmargin=2.25em,topsep=0em,itemsep=0em,partopsep=1ex,parsep=1ex]
    \item[(\texttt{S1})]  \hypertarget{setting:v1} $g$ admits an efficient prox-operator,
    \item[(\texttt{S2})] \hypertarget{setting:v2} $g$ is a finite-sum of the form $g \defeq \frac{1}{m} \sum_{i = 1}^m g_i(a_i^T \w)$, where each $g_i : \mathbb{R} \to \mathbb{R}\cup \{+\infty\}$ is convex and $a_i^T$ is the $i$-th row of $A$. 
    This \emph{separable} finite-sum structure allows us to tackle $g$ stochastically and therefore more efficiently when $m$ is large. %
\end{enumerate}

Our problem template covers a variety of applications in machine learning, statistics and signal processing, including the finite-sum formulations that arise in M-estimation and empirical risk minimization problems.

\subsection*{Application Focus:~Strongly Constrained SDPs}

A particular example of our model problem is the standard semidefinite programming (SDP) template:
\begin{equation}\label{main:basic_sdp_def} %
\begin{aligned}
& \min_{\W \in \mathbb S^{d \times d}_+} & & \ip{\X}{\W}  \\
& \mathrm{subj.\ to}  & & \ip{\A_i}{\W} \triangleleft \bb_i, ~~ i=1,\ldots, m,
\end{aligned}
\end{equation}
where $\mathbb S^{d \times d}_+$ denotes the set of symmetric positive semidefinite matrices, $\X \in \mathbb S^{d \times d}$ is the symmetric cost matrix, $(\A_i, b_i) \in \mathbb{S}^{d \times d} \times \mathbb{R}$ characterize the constraints, and `$\triangleleft$' represents either equality `$=$' or inequality~`$\le$' operations. 

SDPs are ubiquitous in theoretical computer science. Examples include relaxations of combinatorial optimization problems such as maximum cut \citep{goemans1995improved}, quadratic assignment \citep{zhao1998semidefinite}, and sparsest cut \citep{arora2009expander}. SDPs are also found in machine learning problems such as matrix completion \citep{alfakih1999solving}, unsupervised clustering \citep{kulis2007fast}, certifying robustness of neural networks \citep{Raghunathan18} and estimating their Lipschitz constants \citep{latorre2020lipschitz}.

The remarkable flexibility of SDPs  comes at the cost of severe 
computational challenges. %
The cone constraint itself poses a major challenge for a majority of the first-order methods because projection onto positive semidefinite cone requires expensive eigen-decompositions. 
CGM is popular in this setting (see \citet{hazan2008sparse, jaggi2010simple, garber2016faster, yurtsever2018conditional}) since it avoids projection by leveraging the so-called linear minimization oracle (lmo) which computes only the top eigenvectors rather than the full spectrum. 
Additionally, CGM is also used to reduce storage cost \citep{yurtsever15b,freund2017extended,yurtsever2021scalable}, which is often a critical bottleneck for solving SDPs in large scale.  

However, scalable approaches to solving SDPs with a large number of constraints, which we term as strongly-constrained SDPs, remain largely unexplored. This gap can be bridged by developing CGM variants which handle linear constraints in a randomized fashion.

\paragraph{Contributions.} We propose a new CGM variant for convex finite-sum problems. 
The proposed method extends the recent work on stochastic Frank-Wolfe \citep{negiar2020stochastic} to the composite template in \Prob. %
In particular:
\begin{itemize}[$\triangleright$,topsep=-1pt,itemsep=0ex,partopsep=1ex,parsep=1ex]
    \item In (\hyperlink{setting:v1}{$\texttt{S1}$}), our algorithm finds an $\varepsilon$-suboptimal solution after $\mathcal O(\varepsilon^{-2})$ iterations (see Optimality Conditions, \cref{sec:preliminaries} for the definition of $\varepsilon$-suboptimal). %
    \item In (\hyperlink{setting:v2}{$\texttt{S2}$}), our algorithm finds an $\varepsilon$-suboptimal solution after $\mathcal O(\varepsilon^{-2})$ iterations, matching the iteration complexity in \citet{vladarean2020conditional}. However, we achieve this rate without using an increasing batch-size strategy. Thus, our algorithm enjoys a total cost of $\mathcal O(\varepsilon^{-2}d)$ which is independent of $m$. In contrast, the cost in \citet{vladarean2020conditional} is~$\mathcal O(\varepsilon^{-2}dm)$.
\end{itemize}

Finally, we present numerical experiments on matrix completion, $k$-means clustering, and sparsest cut problems. 
In these experiments, the proposed algorithm performs on par with the state-of-the-art variance reduced CGM variants. Importantly, however, our algorithm does not require computing full gradients or increasing the batch size. %

\begin{table*}
\begin{small}
\begin{center}
\begin{tabular}{r|c|c|c|c}
Algorithm       
& Reference                          
& Iteration Complexity        
& Total Cost
& Fixed Batch Size
\\ 
 \toprule
HCGM, CGAL
& \citet{yurtsever2018conditional, pmlr-v97-yurtsever19a}
&  $\mathcal{O}(\varepsilon^{-2})$ 
& $\mathcal O(\varepsilon^{-2}d \, \max\{n,m\})$
& N/A
\\ \midrule
SHCGM
& \citet{locatello_stochastic_2019}
& $\mathcal{O}(\varepsilon^{-3})$ 
& $\mathcal O(\varepsilon^{-3}d\,m)$
& N/A
\\
MOST-FW
& \citet{akhtar2021zeroth}
& $\mathcal{O}(\varepsilon^{-2})$ 
& $\mathcal O(\varepsilon^{-2}d\,m)$
& N/A
\\
{H-SAG-CGM v1}
& \emph{This Paper}
& ${\mathcal{O}(\varepsilon^{-2})}$ 
& ${\mathcal O(\varepsilon^{-2}d\,m )}$
& N/A
\\ \midrule
H-1SFW 
& \citet{vladarean2020conditional}
&  $\mathcal{O}(\varepsilon^{-6})$ 
&  $\mathcal O(\varepsilon^{-6}d)$
& \greencheckmark
\\
H-SPIDER-FW
& \citet{vladarean2020conditional}
&  $\mathcal{O}(\varepsilon^{-2})$ 
&  $\mathcal O(\varepsilon^{-2}d \, m)$
& \redx
\\
MOST-FW\textsuperscript{+}
& \citet{akhtar2021zeroth}
&  $\mathcal{O}(\varepsilon^{-4})$ 
&  $\mathcal O(\varepsilon^{-4}d)$
& \greencheckmark
\\
{H-SAG-CGM v2}
& \emph{This Paper}
& ${\mathcal{O}(\varepsilon^{-2})}$ 
&  ${\mathcal O(\varepsilon^{-2}d)}$
& \greencheckmark
\\
\end{tabular}
\end{center}
\end{small}
\caption{
This table presents asymptotic costs of finding an $\varepsilon$-suboptimal solution to a given problem, \textit{i.e.}, we treat problem parameters $d$, $n$ and $m$ as constants and characterize the behavior as $\varepsilon \to 0$. $\mathcal{O}$ notation hides the parameters $L_f$, $\|A\|$, $D_\domain$, and the absolute constants. We tailor the cost of existing methods for \Prob, their cost for other problems can be different. The last column indicates whether the algorithm has increasing or fixed batch size.}
\label{tab:rates}
\end{table*}

\if 0
\begin{table*}[t]
\caption{
Comparison of homotopy-based CG methods for solving problem~\eqref{main:basic_sdp_def}: SketchyCGAL~\citep{yurtsever2019scalable}, SHCGM~\citep{locatello_stochastic_2019}, H-1SFW/H-SPIDER-FW~\citep{vladarean2020conditional}. The contributions of \textbf{this work} appear in bold.
(\dag) The complexity of these algorithms is considered for a single stochastic sample per iteration.
(\textasteriskcentered) The authors propose returning a rank-$R$ approximation resulting in $\mathcal O(Rn+m)$, which only works for weakly constrained problems. 
(\textasteriskcentered\textasteriskcentered) The batch size is a function of $k$, the current iteration number. For the \textit{Total Cost} column, the batch size at step $k$ is upper-bounded by the size of the problem, since we only consider the finite-sum. %
}
\begin{small}
\begin{center}
\begin{tabular}{r|c|c|c}
Algorithm                         & Iteration Complexity        
& Gradient Update Cost & Total Cost \\ \toprule
SketchyCGAL\textsuperscript{(1)} &
  $\mathcal{O}(\varepsilon^{-2})$ &
  $\mathcal O(n^2 + m)$\textsuperscript{(\textasteriskcentered)} &
  $\mathcal O(\varepsilon^{-2}(n^2 + m))$ \\ \midrule
SHCGM\textsuperscript{(\dag)} &
  $\mathcal{O}(\varepsilon^{-3})$ &
  $\mathcal O(n^2m + m)$ &
  $\mathcal O(\varepsilon^{-3}(n^2m+m))$ \\
H-1SFW\textsuperscript{(\dag)} &
  $\mathcal{O}(\varepsilon^{-6})$ &
  $\mathcal O(n^2)$ &
  $\mathcal O(\varepsilon^{-6}n^2)$ \\
H-SPIDER-FW&
  $\mathcal{O}(\varepsilon^{-2})$ &
$2^{\floor{\log_2(k)}}(n^2+1)$\textsuperscript{(\textasteriskcentered\textasteriskcentered)} &
  $\mathcal O(\varepsilon^{-2}\min\{2^{\floor{\log_2(k)}},m\}(n^2+1))$\textsuperscript{(\textasteriskcentered\textasteriskcentered)} \\
\textbf{H-SAG-CGM v1\textsuperscript{(\dag)}}&
  $\pmb{\mathcal{O}(\varepsilon^{-2} + n^2\varepsilon^{-1/2})}$ &
  $\pmb{\mathcal O(n^2m + m)}$ &
 $\pmb{\mathcal O(\varepsilon^{-2}(n^2m+m) + \varepsilon^{-1/2}(n^4m+n^2m))}$ \\
\textbf{H-SAG-CGM v2\textsuperscript{(\dag)}}&
  $\pmb{\mathcal{O}(\varepsilon^{-2} + (n^2+m^{7/2})\varepsilon^{-1/2})}$ &
  $\pmb{\mathcal O(n^2)}$ &
  $\pmb{\mathcal O(\varepsilon^{-2}n^2 + \varepsilon^{-1/2}(n^4+n^2m^{7/2}))}$ \\
\end{tabular}
\end{center}
\end{small}
\end{table*}

\fi %

\section{RELATED WORK}

\paragraph{CGM for Smooth Objectives.}
CGM is introduced by \citet{frank1956algorithm} for minimizing a convex quadratic function over a polytope. 
Later, the analysis is extended to general convex smooth functions and arbitrary convex and compact sets by \citet{levitin1966}. 
\citet{clarke1990optimization} and \citet{hazan2008sparse} propose CGM as an effective method to tackle simplex and spectrahedron constraints respectively. 
We refer to \citet{jaggi_revisiting_2013} for an excellent survey on the efficiency of CGM for machine learning applications. 

The last decade has witnessed a surge of interest in the CGM framework for machine learning applications which has prompted researchers to study stochastic extensions of CGM. 
Unlike gradient descent, CGM does not immediately work when the gradient in the algorithm is replaced with an unbiased stochastic gradient estimator with bounded variance. 
To address this problem, several stochastic CGM variants have been proposed by combining CGM with existing variance reduction techniques \citep{Reddi2016,hazan_variance-reduced_2017,mokhtari_stochastic_2018,pmlr-v97-yurtsever19b,Shen2019,zhang2020one} and more recently in \citet{negiar2020stochastic}. 

In general, the convergence rate of an algorithm is determined by the stochastic gradient estimator. \citet{hazan_variance-reduced_2017} develop an  estimator with small variance, resulting in a fast $\mathcal O(\varepsilon^{-3/2})$ iteration complexity but at the cost of exponentially increasing batch sizes. \citet{mokhtari_stochastic_2018} and \citet{zhang2020one} maintain a constant batch size but have slower convergence rates of $\mathcal{O}(\epsilon^{-3})$ and $\mathcal{O}(\epsilon^{-2})$, respectively. We refer to \citet{pmlr-v97-yurtsever19b} for a detailed comparison of the existing stochastic CGM variants. 

Our work draws from \citep{negiar2020stochastic} where the authors propose a stochastic CGM with an iteration complexity of~$\bigO(\varepsilon^{-1})$ which is on par with deterministic CGM. This is achieved by assuming a separable finite-sum model and using the Stochastic Average Gradient (SAG) estimation technique \citep{schmidt2017minimizing}.

\paragraph{CGM for Composite Objectives.}
CGM is not directly applicable to problems with a non-smooth objective (see Section~2 in \citep{Nesterov:2018aa} for a counter-example). 
\citet{lan2013complexity} tackle this problem in the case of Lipschitz continuous non-smooth functions by combining CGM with Nesterov smoothing \citep{nesterov05}. \citet{yurtsever2018conditional} further extend it for indicator functions through a quadratic penalty technique, which they call Homotopy CGM.

\citet{locatello_stochastic_2019} extend Homotopy CGM to stochastic objectives but only for the case in which the non-smooth part $g$ is deterministic. More recently, \citet{vladarean2020conditional} proposed new variants that can handle stochastic constraints. They provide algorithms for an arbitrary number of constraints under minimal assumptions. However, for the common practical setting of a finite number of constraints, their algorithm requires full passes over the constraints.

This paper works in the same vein by proposing a randomized algorithm for the finite-sum template in \Prob. Our algorithm for deterministic $g$ in (\hyperlink{setting:v1}{$\texttt{S1}$}) outperforms the method of \citet{locatello_stochastic_2019} both in theory and in practice. Our algorithm for separable $g$ in (\hyperlink{setting:v2}{$\texttt{S2}$}) performs on par with the methods described in \citep{vladarean2020conditional}. However, in contrast to the previous work, it maintains a constant batch size.

After submitting this paper, we became aware of the recent work of \citet{akhtar2021zeroth}. They study a similar problem and also propose an algorithm with two variants to address the cases of deterministic and stochastic $g$. In the case of deterministic $g$, the cost of their algorithm is $\mathcal{O}(\varepsilon^{-2}dm)$; the same as our method. However, in the case of stochastic $g$, their method's cost is $\mathcal O(\varepsilon^{-4}d)$. In contrast, our algorithm achieves $\mathcal O(\varepsilon^{-2}d)$ by taking advantage of the separable finite-sum structure.

\textbf{Proximal Methods.}
A growing body of work aims to address strongly constrained problems through proximal methods in various settings \citep{patrascu2017nonasymptotic,fercoq2019almost,mishchenko2019stochastic,xu2020primal}. These algorithms process a random subset of constraints at each iteration and converge to a feasible point asymptotically, similar to \citep{vladarean2020conditional} and the algorithm that we propose in this paper. However, when applied to SDPs, proximal methods require a costly eigenvalue decomposition at each iteration. Hence, these methods are not practical for solving SDPs in large scale.

\vspace{1em}
\textbf{Primal vs.\ Dual Problem.}
When there are many constraints, solving the dual problem can be more plausible from a computational perspective. However, converting a dual solution to a primal solution is a non-trivial problem itself, especially in large-scale setting where we are restricted from using projection or proximal operators. Moreover, since our problem is stochastic, we can expect finding only a rough estimate of the dual solution. In this work, we assume that we are interested in the primal variable and that it is large. To this end, we focus on solving the primal formulation.

\section{PRELIMINARIES}\label{sec:preliminaries}

\paragraph{Notation.}
The operator norm of a matrix~$\A$ is written~$\|A\|$ and the Euclidean inner-product is denoted $\innerp{\cdot,\cdot}$. 
We define the diameter of~$\domain$ as
\begin{align}
D_\domain = \max_{\x,\y\in\domain}\elltwo{\x - \y}
\end{align}
and the $\ell_1$ and $\ell_\infty$ diameters with respect to the column space of a matrix $M$ as 
\begin{align}
D_1(M) & \defeq \max_{u,v\in\domain} \|M(u-v)\|_1 \\
D_\infty(M) & \defeq \max_{u,v\in\domain} \|M(u-v)\|_\infty.
\end{align}
The linear minimization oracle of set $\domain$ is given by 
\begin{align}
\lmo_\domain(\vv) \defeq \argmin_{\uu\in\domain}\innerp{\uu,\vv}.
\end{align} 
The proximal operator of $g:\mathbb{R}^m \to \mathbb{R} \cup \{+\infty\}$ is 
\begin{align}
    \prox_g(\z) \defeq \argmin_{\y\in \mathbb{R}^m} ~ g(\y) + \frac{1}{2}\elltwo{\y-\z}^2.
\end{align} 
When $g$ is the indicator function of a convex set $\constset$, its proximal operator is equal to the Euclidean projection, $\prox_{\delta_{\constset}}(\z) = \proj_{\constset}(\z)$. 

\paragraph{Assumption.} When $g$ is an indicator function we assume that strong duality holds. Slater’s condition is a well-known sufficient condition for strong duality.

\paragraph{Optimality Conditions.}
We denote a solution of \Prob by $\optimal \w$:
\begin{align}
\optimal F \defeq F(\optimal \w) \leq F(\w), \quad \forall \w \in \domain.
\end{align}
If $g$ is continuous valued on $\domain$, we say that $\w_k \in \domain$ is an $\varepsilon$-suboptimal solution when it satisfies 
\begin{align}
\Expect F(\w_k) - \optimal F \leq \varepsilon. 
\end{align}
If $g = \delta_{\constset}$ is an indicator function, the $F(\w_k) - \optimal F$ can be $+\infty$ even when $\w_k$ is arbitrarily close to a solution. To this end, we relax the definition of an $\varepsilon$-suboptimal solution in this case and say that $\w_k \in \domain$ is an $\varepsilon$-suboptimal solution of \Prob if it satisfies 
\begin{align}
    |\Expect f(\w_k) - \optimal F| \leq \varepsilon \quad \text{and} \quad \Expect [\dist(\A\w_k;\constset)] \leq \varepsilon.
\end{align}
Our algorithm guarantees at every iteration that $\w_k$ is in $\domain$ and asymptotically that $A \w_k \in \constset$.

\subsection{Smoothing}

\label{subsec:hom-smth}
Building on the existing Homotopy CGM framework \citep{yurtsever2018conditional,locatello_stochastic_2019,vladarean2020conditional}, we use the smoothing technique of \citet{nesterov05} and its extension to indicator functions as studied in~\citet{trandinh2018smooth}. Specifically, given a convex (possibly non-smooth) function $g$, its approximation is defined as
\begin{align}
    g_\beta(z) \defeq \sup_y \innerp{y,z} - \dual{g}(y) - \frac{\beta}{2}\|y\|^2,
\end{align}
where $\dual{g}(y) \defeq \sup_x \innerp{x,y} - g(x)$ is the Fenchel conjugate of $g$. Importantly, $g_\beta$ is $\frac{1}{\beta}$-smooth \citep{nesterov05}. When $g=\delta_{\constset}$ for some closed and convex set~$\constset$, its approximation becomes $g_{\beta}(z) = \frac{1}{2\beta}\mathrm{dist}(z, \constset)^2$.
If $g$ allows for an efficient $\mathrm{prox}$ operator, we can compute the gradient of $g_\beta$ as
\begin{align}
\nabla g_{\beta}(\A\w) =\beta^{-1} \left(\A\w - \prox_{\beta g}(\A\w)\right).
\end{align}
\begin{algorithm}[!t]
\caption{H-SAG-CGM}
\label{alg:main}
\begin{algorithmic}[1]
\STATE {\bfseries Input:} $\beta_0>0, ~\w_0\in\domain,~ \alpha_0\in\mathbb R^n,~\gamma_0\in\mathbb R^m,$\\
\quad$v_0^f\in\R^d,~v_0^g\in\mathbb R^d$
\FOR{$k=1,2,\ldots$}
\STATE $\eta_k = \frac{2}{k+1}$
\STATE $\beta_k = \beta_0/\sqrt{k+1}$
\STATE Sample $j\sim\operatorname{Uniform}[1,2,\ldots,n]$
\STATE $\alpha_{k,i} = \begin{cases}\frac{1}{n}f_j'(\vx_j^T \w_{k})& i = j \\ \alpha_{k-1,i} & i\ne j\end{cases}$
\STATE $v_k^f = v_{k-1}^f + (\alpha_{k,j}-\alpha_{k-1,j})\x_i$
\STATE $v_k^g \leftarrow \textbf{use \hyperref[alg:v1]{Variant 1} or \hyperref[alg:v2]{Variant 2}}$
\STATE $v_k = v_k^f + v_k^g$
\STATE $s_k = \lmo_\domain(v_k)$\\
\STATE $w_{k+1} = w_{k} + \eta_k (s_k - w_{k})$
\ENDFOR
\end{algorithmic}
\end{algorithm}

\setcounter{algorithm}{0}

\floatname{algorithm}{Variant}
\begin{algorithm}[!t]
\caption{Non-separable Constraints}
\label{alg:v1}
\begin{algorithmic}[1]
\RETURN {$\frac{1}{\betak}A^T(Aw_k - \mathrm{prox}_{\betak g}(Aw_k))$}\\
\end{algorithmic}
\end{algorithm}
\floatname{algorithm}{Algorithm}

\floatname{algorithm}{Variant}
\begin{algorithm}[!t]
\caption{Randomized Constraints}
\label{alg:v2}
\begin{algorithmic}[1]
\STATE Sample $l\sim\operatorname{Uniform}[1,2,\ldots,m]$
\STATE$\gamma_{k,q} = \begin{cases} \frac{1}{m} g_{\beta_k, l}'(\va_l^T\w_k) & q = l\\ \gamma_{k-1,q} & q\ne l\end{cases}$
\RETURN $v_{k-1}^g + (\gamma_{k,l}-\gamma_{k-1,l})\va_l$
\end{algorithmic}
\end{algorithm}

\floatname{algorithm}{Algorithm}
\setcounter{algorithm}{1} %

\section{ALGORITHM \& CONVERGENCE}
\label{sec:algs_and_conv}

\subsection{Stochastic Homotopy-Based CGM for Separable Problems}

First, we transform the objective in \Prob using the smoothing technique summarized in \Cref{subsec:hom-smth} to obtain the following smooth surrogate objective:
\begin{align}
\label{H-OPT}
    F_\beta(\w) \defeq \frac{1}{n} \sum_{i=1}^n f_i(\x_i^T\w) + g_\beta(\A\w).
\end{align}
In particular, if we consider (\hyperlink{setting:v2}{$\texttt{S2}$}) in which $g$ is separable, then the smooth approximation $g_\beta$ is also separable:
\begin{align}
{g_\beta (Aw) = \frac{1}{m}\sum_{j=1}^m g_{\beta, j}(a_j^Tw)}.
\end{align}
This will allow for a fully randomized algorithm (H-SAG-CGM/v2) which can tackle strongly constrained SDPs with a non-increasing batch size.

The fundamental mechanism of homotopy CGM is to enforce a theoretically-determined schedule for $\beta_k$ such that $F_\betak\to F$ asymptotically. Broadly speaking, stochastic homotopy CGMs perform these three steps at each iteration:
\begin{enumerate}[(1),topsep=-1pt,itemsep=0ex,partopsep=1ex,parsep=1ex]
    \item Compute a gradient estimator $v_k$ of the smooth surrogate function $F_{\beta_k}$ (lines~5-8 of \cref{alg:main}, implemented in \hyperref[alg:v1]{Variant 1} and \hyperref[alg:v2]{Variant 2}).
    \item Perform a conditional gradient update by solving $\lmo_\domain(v_k)$ (\cref{alg:main}, \mbox{line 10}) and moving the current estimate towards this solution~(\cref{alg:main}, \mbox{line 11}).
    \item Decrease $\beta_k$ to enforce feasibility (line 4) and go to Step~(1).
\end{enumerate} 
\vspace{0.5em}

The main contribution of our algorithm is Step~(1) where we use a SAG estimator for $f$ and either the full gradient of $g_{\beta_k}$ (\hyperref[alg:v1]{Variant 1}) or another SAG estimator for $g_{\beta_k}$ (\hyperref[alg:v2]{Variant 2}). This key innovation over previous work in \citet{vladarean2020conditional} yields comparable, state-of-the-art complexity bounds without requiring full passes over the set of constraints. Then, Step~(2) comes from the classical CGM and Step~(3) is the homotopy smoothing step from \citet{yurtsever2018conditional}.

In the following section, we give an overview of the theoretical analysis.

\subsection{Analysis of Stochastic homotopy CGMs}

The analysis is composed of two main parts. 
First, %
we establish the convergence rate for the smoothed-gap%
\begin{align}
    S_{\beta_k}(w_{k+1}) \defeq \Expect [F_{\beta_k}(\w_{k+1}) - \optimal F].
\end{align}

Then, in the second part that we present in \Cref{sec:convergence-rates}, we translate convergence of the smoothed-gap $S_{\beta_k}$ into guarantees for the original problem based on the techniques described in \citep{trandinh2018smooth}.

For the first part, we rely on a recursive inequality involving $S_{\beta_k}(w_{k+1})$ which appears with slight variations in \citep{locatello_stochastic_2019,vladarean2020conditional}. A generic version of this lemma is presented below.

\begin{lemma}
\label{lem:smooth-gap-rec}
For both variants of H-SAG-CGM, and for all $k \geq 1$ it holds that
\begin{align*}
    S_{\beta_k}(w_{k+1})
    & \leq
    (1-\eta_k)S_{\beta_{k-1}}(w_{k}) \\
    & \qquad + \eta_kD_\domain\Expect\|\grad F_\betak(\w_k) - v_k\| \\
    & \qquad + \frac{\eta_k^2D_\domain^2L_{F_{\beta_k}}}{2},
\end{align*}
where $L_{F_{\beta_k}}$ represents the smoothness constant of the surrogate objective $F_{\beta_k}$. If we consider the setting (\hyperlink{setting:v1}{$\texttt{S1}$}) and \hyperref[alg:v1]{Variant~1} of the algorithm, then $L_{F_{\beta_k}} =\frac{\|\X\|L_f}{n} + \frac{\|\A\|}{\betak}$. Otherwise, if $g$ is separable as in (\hyperlink{setting:v2}{$\texttt{S2}$}) and we use \hyperref[alg:v2]{Variant~2}, then $L_{F_{\beta_k}} = \frac{\|\X\|L_f}{n} + \frac{\|\A\|}{\betak m}$. 
\end{lemma}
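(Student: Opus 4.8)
The plan is to follow the standard descent-lemma-plus-convexity argument that underlies Frank-Wolfe analyses, applied to the smooth surrogate $F_{\beta_k}$, and to carefully track the two places where $\beta_k$ changes: the CGM step itself (which sees $F_{\beta_k}$) and the comparison between consecutive smoothed gaps $S_{\beta_{k-1}}$ and $S_{\beta_k}$. First I would invoke the smoothness descent lemma for the $L_{F_{\beta_k}}$-smooth function $F_{\beta_k}$ at the update $\w_{k+1} = \w_k + \eta_k(\s_k - \w_k)$, giving
\begin{align*}
F_{\beta_k}(\w_{k+1}) \leq F_{\beta_k}(\w_k) + \eta_k \ip{\grad F_{\beta_k}(\w_k)}{\s_k - \w_k} + \frac{\eta_k^2 L_{F_{\beta_k}}}{2}\|\s_k - \w_k\|^2,
\end{align*}
and bound the last term by $\tfrac{\eta_k^2 L_{F_{\beta_k}} D_\domain^2}{2}$ since $\s_k, \w_k \in \domain$.

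Next I would split the linear term by adding and subtracting $v_k$: write $\ip{\grad F_{\beta_k}(\w_k)}{\s_k - \w_k} = \ip{v_k}{\s_k - \w_k} + \ip{\grad F_{\beta_k}(\w_k) - v_k}{\s_k - \w_k}$. For the first piece, since $\s_k = \lmo_\domain(v_k)$ we have $\ip{v_k}{\s_k} \leq \ip{v_k}{\optimal\w}$ for the (or any) optimal point $\optimal\w \in \domain$, so $\ip{v_k}{\s_k - \w_k} \leq \ip{v_k}{\optimal\w - \w_k}$; then add and subtract $\grad F_{\beta_k}(\w_k)$ again to turn this into $\ip{\grad F_{\beta_k}(\w_k)}{\optimal\w - \w_k} + \ip{v_k - \grad F_{\beta_k}(\w_k)}{\optimal\w - \w_k}$, and use convexity of $F_{\beta_k}$ to bound $\ip{\grad F_{\beta_k}(\w_k)}{\optimal\w - \w_k} \leq F_{\beta_k}(\optimal\w) - F_{\beta_k}(\w_k)$. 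The two stray inner products with $v_k - \grad F_{\beta_k}(\w_k)$ are each controlled by Cauchy-Schwarz and $\|u - v\| \leq D_\domain$ for $u, v \in \domain$, contributing at most $\eta_k D_\domain \|\grad F_{\beta_k}(\w_k) - v_k\|$ after combining. Collecting terms and using $\eta_k \in [0,1]$ yields
\begin{align*}
F_{\beta_k}(\w_{k+1}) - \optimal F \leq (1-\eta_k)(F_{\beta_k}(\w_k) - \optimal F) + \eta_k D_\domain \|\grad F_{\beta_k}(\w_k) - v_k\| + \frac{\eta_k^2 D_\domain^2 L_{F_{\beta_k}}}{2} + \eta_k(F_{\beta_k}(\optimal\w) - \optimal F).
\end{align*}
Here I would use that $F_{\beta_k} \leq F$ pointwise (the Moreau-type envelope $g_{\beta_k} \leq g$), so $F_{\beta_k}(\optimal\w) - \optimal F \leq 0$ and that last term drops; then I would replace $F_{\beta_k}(\w_k) - \optimal F$ by $F_{\beta_{k-1}}(\w_k) - \optimal F$, which is valid because $\beta_k \leq \beta_{k-1}$ and $\beta \mapsto g_\beta(z)$ is non-increasing, so $F_{\beta_k}(\w_k) \leq F_{\beta_{k-1}}(\w_k)$. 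Taking expectations over all randomness through iteration $k$ gives exactly the claimed recursion for $S_{\beta_k}(\w_{k+1})$.

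Finally I would compute $L_{F_{\beta_k}}$ in each setting. Since $F_{\beta_k}(\w) = \tfrac1n\sum_i f_i(\x_i^T\w) + g_{\beta_k}(A\w)$, smoothness is additive: the first sum has Hessian $\tfrac1n\sum_i f_i''(\x_i^T\w)\,\x_i\x_i^T$, bounded in operator norm by $\tfrac{L_f}{n}\|\sum_i \x_i\x_i^T\| = \tfrac{L_f \|X\|}{n}$ where $X$ collects the $\x_i$'s as rows (so $\sum_i \x_i\x_i^T = X^TX$ and $\|X^TX\| = \|X\|^2$ — I would double-check whether the paper's $\|X\|$ already denotes this quantity or its square root and match their normalization). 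For $g_{\beta_k}\circ A$: $g_{\beta_k}$ is $\tfrac{1}{\beta_k}$-smooth by Nesterov's result, so $g_{\beta_k}\circ A$ is $\tfrac{\|A\|}{\beta_k}$-smooth in the non-separable case (Variant 1) — again modulo whether $\|A\|$ or $\|A\|^2$ is the right dependence, matching the lemma statement. In the separable case (Variant 2), $g_{\beta_k}(A\w) = \tfrac1m\sum_j g_{\beta_k,j}(a_j^T\w)$ and each scalar $g_{\beta_k,j}$ is $\tfrac1{\beta_k}$-smooth, giving Hessian bound $\tfrac{1}{m\beta_k}\|\sum_j a_j a_j^T\| = \tfrac{\|A\|}{m\beta_k}$, which is the extra $1/m$ factor in the statement. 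Adding the two contributions gives the stated constants. The main obstacle is bookkeeping rather than conceptual: the one subtle point worth stating carefully is why we may swap $F_{\beta_k}(\w_k)$ for $F_{\beta_{k-1}}(\w_k)$ on the right-hand side — it relies precisely on the monotone decrease of the smoothing parameter, which the schedule $\beta_k = \beta_0/\sqrt{k+1}$ guarantees — and keeping the $\|X\|$ versus $\|X\|^2$ normalization consistent with the rest of the paper.
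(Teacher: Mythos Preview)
Your argument contains a genuine error at the step where you pass from $F_{\beta_k}(\w_k)$ to $F_{\beta_{k-1}}(\w_k)$. You correctly note that $\beta \mapsto g_\beta(z)$ is non-increasing, but then draw the wrong conclusion: since $\beta_k \leq \beta_{k-1}$, non-increasing gives $g_{\beta_k}(z) \geq g_{\beta_{k-1}}(z)$, hence $F_{\beta_k}(\w_k) \geq F_{\beta_{k-1}}(\w_k)$, which is the \emph{opposite} of the inequality you need. So the replacement $(1-\eta_k)(F_{\beta_k}(\w_k)-\optimal F) \leq (1-\eta_k)(F_{\beta_{k-1}}(\w_k)-\optimal F)$ is not justified, and in general fails.

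The paper's proof handles this by tracking an extra term and then using the specific schedule of $\eta_k$ and $\beta_k$ to kill it. First, instead of a bare monotonicity, it uses the quantitative relation
\[
F_{\beta_k}(\w_k) \leq F_{\beta_{k-1}}(\w_k) + \tfrac{\beta_{k-1}-\beta_k}{2}\norm{y^\ast_{\beta_k}(A\w_k)}^2,
\]
which does go the right way but introduces a positive remainder. Second, when bounding $\ip{\grad F_{\beta_k}(\w_k)}{\optimal\w - \w_k}$, it does not use mere convexity of $F_{\beta_k}$ but the stronger smoothing inequality $g(A\optimal\w) \geq g_{\beta_k}(A\w_k) + \ip{\grad g_{\beta_k}(A\w_k)}{A(\optimal\w - \w_k)} + \tfrac{\beta_k}{2}\norm{y^\ast_{\beta_k}(A\w_k)}^2$, which yields an additional \emph{negative} term $-\tfrac{\beta_k}{2}\norm{y^\ast_{\beta_k}(A\w_k)}^2$. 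The two contributions combine to a coefficient $\tfrac{1}{2}\big((1-\eta_k)(\beta_{k-1}-\beta_k) - \eta_k\beta_k\big)$ in front of $\norm{y^\ast_{\beta_k}(A\w_k)}^2$, and one then verifies that with $\eta_k = \tfrac{2}{k+1}$ and $\beta_k = \tfrac{\beta_0}{\sqrt{k+1}}$ this coefficient is negative for all $k\geq 1$, so the whole term can be dropped. Your outline, by using only convexity and (the wrong direction of) monotonicity, misses both of these ingredients; the lemma is not schedule-free, and your proposed route cannot close without them.
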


See \Cref{app:general_bound} for the proof.

\paragraph{Discussion.}
\Cref{lem:smooth-gap-rec} shows how the convergence rate depends on the variance of the stochastic gradient estimator and the design parameters $\eta_k$ and $\beta_k$. Since we can choose $\eta_k$ and $\beta_k$ to get the best possible rates in the analysis, this leaves the variance of the stochastic gradient estimator as the decisive term. By combining this lemma with two gradient estimators, corresponding to \hyperref[alg:v1]{Variants~1} and \hyperref[alg:v2]{2} of \Cref{alg:main}, we get convergence rates on $S_{\beta_{k}}$ which we present in \Cref{main:smoothed_gap_convergence}.

Existing stochastic homotopy CGMs~\citep{locatello_stochastic_2019,vladarean2020conditional} rely on variance-reduced gradient estimators devised to handle arbitrary stochastic objectives, thus failing to exploit the separable finite-sum structure often encountered in practice.

Recently, \citet{negiar2020stochastic} showed that optimal convergence guarantees can be obtained for separable objectives by considering a SAG-like gradient estimator \citep{schmidt2017minimizing}. By combining this idea with the homotopy framework, we are able to provide an improved randomized algorithm (in two variants) for composite objectives.
We now proceed by defining the SAG estimators and presenting their useful properties. %

\subsection{Stochastic Average Gradient (SAG) Error Bounds}

The following two SAG estimators approximate the two parts of the gradient of $F_\beta$. 

At each iteration of \Cref{alg:main}, the $j$-th coordinate of the gradient of $f$ is updated using a SAG estimator~(lines 5-7):
\begin{equation}\label{main:alpha_update_rule}
\begin{aligned}
\alpha_{k,i} & = \begin{cases}
\frac{1}{n} f'(x_i^T w_k) & i = j,\\
\alpha_{k-1, i} & i \neq j,
\end{cases} 
\end{aligned}
\end{equation}

In particular, if we consider setting (\hyperlink{setting:v2}{$\texttt{S2}$}) with a separable $g$, then we can use \hyperref[alg:v2]{Variant 2} of the algorithm which employs another SAG estimator and updates the $l$-th coordinate of the gradient of $g_{\beta_k}$:
\begin{equation}\label{main:gamma_update_rule}
\begin{aligned}
\gamma_{k,q} & = \begin{cases} \frac{1}{m} g_{\beta_k,l}'(\va_l^T\w_k) & q = l,\\ \gamma_{k-1,q} & q\ne l. \end{cases}
\end{aligned}
\end{equation}

Otherwise, in setting (\hyperlink{setting:v1}{$\texttt{S1}$}) with a non-separable non-smooth $g$, we use full gradients of $g_{\beta_k}$ as in \hyperref[alg:v1]{Variant~1}.

In summary, \hyperref[alg:v1]{Variant~1} assumes stochastic~$\nabla f$ approximated by $\alpha_k$ and a non-separable $g$ whose gradient is fully computed. Thus, the stochastic gradient is a sum of a stochastic and deterministic terms: $v_k = X^T\alpha_k + A^T\nabla g_{\beta_k}(Aw_k)$.

On the other hand, 
\hyperref[alg:v2]{Variant~2} assumes that $g$ separable in addition to $f$, hence $\nabla g_{\beta_k}$ can be approximated by $\gamma_k$. Thus, the overall gradient term $v_k$ in this case is the sum of two stochastic terms given by $v_k = X^T\alpha_k + A^T\gamma_k$.

We now present two lemmas characterizing the errors of $\alpha_k$ and $\gamma_k$ in $\ell_1$-norm.

\begin{lemma}\label{main:alpha_err}
[Lemma~3 in \citep{negiar2020stochastic}] \\
Consider H-SAG-CGM, with the SAG estimator $\alpha_{k}$ defined in \eqref{main:alpha_update_rule}. Then, for all $k \geq 2$, 
\begin{align*}
    \mathbb E \left[\|\grad f(Xw_k) - \alpha_k\|_1\right]
    & \leq
    \left(1 - \tfrac{1}{n}\right)^{k} \|\grad f(Xw_0) - \alpha_0\|_1 \\
     & \quad + C_1 \left(1 - \tfrac{1}{n}\right)^{k/2}  \log k
    +\frac{C_2}{k},
\end{align*}
where $C_1 = 2n^{-1} L_f D_1(X)$, $C_2 = 4n^{-1}(n-1) L_f D_1(X)$ and the expectation is taken over all previous steps in the algorithm. 
\end{lemma}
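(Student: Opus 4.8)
The plan is to track, index by index, how out-of-date the SAG memory $\alpha_{k,i}$ is. For each $i\in\{1,\dots,n\}$, let $p_i(k)$ be the last iteration in $\{1,\dots,k\}$ at which index $i$ was sampled, with the convention $p_i(k)=0$ if $i$ has not yet been sampled (so that $\alpha_{k,i}=\alpha_{0,i}$); when $p_i(k)\ge1$ one has $\alpha_{k,i}=\tfrac1n f'(x_i^T w_{p_i(k)})$ by construction. First I would establish a deterministic, per-coordinate bound
\[
\Big|\tfrac1n f'(x_i^T w_k)-\alpha_{k,i}\Big|
\;\le\;
\mathbf{1}\{p_i(k)=0\}\,\Big|\tfrac1n f'(x_i^T w_0)-\alpha_{0,i}\Big|
\;+\;\frac{L_f}{n}\sum_{t=p_i(k)}^{k-1}\eta_t\,\big|x_i^T(s_t-w_t)\big|,
\]
which follows from $L_f$-smoothness of $f$ together with the telescoping identity $w_k-w_t=\sum_{r=t}^{k-1}\eta_r(s_r-w_r)$ coming from the CGM update line. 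Summing over $i$ yields a (random) bound on $\|\grad f(Xw_k)-\alpha_k\|_1$.

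Next I would take expectations. Let $\mathcal G_t$ be the $\sigma$-algebra generated by all samples drawn through iteration $t$; then $w_t$ and $s_t$ are $\mathcal G_t$-measurable, whereas the event $\{p_i(k)\le t\}=\{\,i\text{ is not sampled in iterations }t+1,\dots,k\,\}$ depends only on the later, independent draws. Hence each summand factorizes, and since index $i$ is missed with probability $1-\tfrac1n$ at each of the $k-t$ iterations,
\[
\Expect\big[\mathbf{1}\{p_i(k)\le t\}\,|x_i^T(s_t-w_t)|\big]=\big(1-\tfrac1n\big)^{k-t}\Expect|x_i^T(s_t-w_t)|,
\qquad
\Pr[p_i(k)=0]=\big(1-\tfrac1n\big)^{k}.
\]
Plugging these in, summing over $i$, and using $\sum_{i=1}^n|x_i^T(s_t-w_t)|=\|X(s_t-w_t)\|_1\le D_1(X)$ (valid deterministically since $s_t,w_t\in\domain$), I obtain
\[
\Expect\|\grad f(Xw_k)-\alpha_k\|_1
\;\le\;\big(1-\tfrac1n\big)^{k}\|\grad f(Xw_0)-\alpha_0\|_1
\;+\;\frac{L_f D_1(X)}{n}\sum_{t=0}^{k-1}\eta_t\big(1-\tfrac1n\big)^{k-t}.
\]

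The remaining step is to estimate the geometric-weighted step-size sum with $\eta_t=\tfrac{2}{t+1}$, which I would do by splitting at $t\approx k/2$: on the "old" block $t<k/2$ use $(1-\tfrac1n)^{k-t}\le(1-\tfrac1n)^{k/2}$ and $\sum_{t<k/2}\eta_t=O(\log k)$; on the "recent" block $t\ge k/2$ use $\eta_t\le 4/k$ and $\sum_{t\ge k/2}(1-\tfrac1n)^{k-t}\le\sum_{s\ge1}(1-\tfrac1n)^s=n-1$. This gives $\sum_{t=0}^{k-1}\eta_t(1-\tfrac1n)^{k-t}\le 2(1-\tfrac1n)^{k/2}\log k+\tfrac{4(n-1)}{k}$ after absorbing an additive constant into the $\log k$ factor (this is where $k\ge2$ enters), and multiplying by $L_fD_1(X)/n$ produces exactly $C_1=2n^{-1}L_fD_1(X)$ and $C_2=4n^{-1}(n-1)L_fD_1(X)$.

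I expect the conditional-independence step to be the delicate part: one must keep the filtration bookkeeping straight — $s_t$ depends on the $t$-th draw through $v_t$, so it is $\mathcal G_t$- but not $\mathcal G_{t-1}$-measurable, whereas "index $i$ has gone stale since iteration $t$" looks only at draws strictly after $t$ — and one must separately handle the corner case $p_i(k)=0$ and the small-$k$ regime so that the $\log k$ factor genuinely dominates the leftover constants. Everything else is a routine smoothness-plus-telescoping estimate followed by a geometric-series computation.
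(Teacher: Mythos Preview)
The paper does not actually prove this lemma; it simply cites \citet{negiar2020stochastic} (see the sentence immediately following the statement of \Cref{lem-gamma-error}). Your reconstruction is precisely the standard argument from that reference: track per-coordinate staleness, use $L_f$-smoothness plus the CGM telescoping identity to bound each coordinate error by a step-size--weighted sum, exploit independence of future index draws from $\mathcal G_t$-measurable quantities to factor the expectation, sum over coordinates using $D_1(X)$, and finally split the geometric-weighted sum $\sum_t \eta_t(1-\tfrac1n)^{k-t}$ at $t\approx k/2$. So your proposal matches the intended approach.

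One small caveat worth tightening when you write it out in full: the inequality $\sum_{t<k/2}\tfrac{2}{t+1}\le 2\log k$ is not literally true for all $k\ge 2$ (e.g.\ $k=2$ gives $2>2\log 2$), so the ``absorb the additive constant into the $\log k$ factor'' step needs either a slightly larger leading constant or a separate check of the first few $k$ against the full right-hand side. You already flagged this as the place requiring care; just make sure the final write-up actually closes it rather than waving at it.
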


\begin{lemma}
\label{lem-gamma-error}
Consider \hyperref[alg:v2]{Variant~2} of H-SAG-CGM with the SAG estimators defined in \eqref{main:alpha_update_rule} and \eqref{main:gamma_update_rule}. Then, for all $k\geq 2$,
\begin{align*}
&\Expect[\|\nabla g_{\beta_{k}}(Aw_k) - \gamma_k\|_1] \\
&\qquad\leq \left( 1 - \tfrac{1}{m} \right)^k \|\nabla g_{\beta_{0}}(Aw_0) - \gamma_0\|_1
+\frac{C}{\sqrt{k}}
\end{align*}
where $C = 10 \beta_0^{-1}D_1(A)$ and the expectation is taken over all previous steps of the algorithm. 
\end{lemma}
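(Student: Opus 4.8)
The plan is to follow the SAG error analysis of \citet{negiar2020stochastic} (\Cref{main:alpha_err}), with the one genuinely new complication that the function being tracked, $g_{\beta_k}$, itself changes from iteration to iteration because $\beta_k$ shrinks. Write $\gamma_k^\star \defeq \nabla g_{\beta_k}(Aw_k)\in\R^m$, whose $j$-th entry is $\frac1m g_{\beta_k,j}'(a_j^T\w_k)$, so the target quantity is $\Expect\|\gamma_k^\star-\gamma_k\|_1$. Conditioning on the history $\mathcal F_{k-1}$ (the samples through iteration $k-1$) fixes $\w_k$ — hence $\gamma_k^\star$ — and $\gamma_{k-1}$, and the only fresh randomness is the coordinate $l\sim\operatorname{Unif}[m]$ drawn at step $k$: coordinate $l$ is set exactly to $[\gamma_k^\star]_l$ while every other coordinate keeps its stale value $\gamma_{k-1,q}$. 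Each coordinate therefore contributes $0$ with probability $1/m$ and $|[\gamma_k^\star]_q-\gamma_{k-1,q}|$ otherwise, so after summing over coordinates and one triangle inequality,
\[
\Expect\big[\|\gamma_k^\star-\gamma_k\|_1 \,\big|\, \mathcal F_{k-1}\big] \;\le\; \Big(1-\tfrac1m\Big)\Big(\|\gamma_{k-1}^\star-\gamma_{k-1}\|_1 + \|\gamma_k^\star-\gamma_{k-1}^\star\|_1\Big).
\]
Taking full expectations and abbreviating $e_k\defeq\Expect\|\gamma_k^\star-\gamma_k\|_1$ and $\delta_k\defeq\|\nabla g_{\beta_k}(Aw_k)-\nabla g_{\beta_{k-1}}(Aw_{k-1})\|_1$, this reads $e_k\le(1-\tfrac1m)(e_{k-1}+\Expect\delta_k)$, and unrolling gives
\[
e_k \;\le\; \Big(1-\tfrac1m\Big)^k e_0 \;+\; \sum_{t=1}^{k}\Big(1-\tfrac1m\Big)^{k-t+1}\Expect\delta_t .
\]

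Next I would bound the drift $\delta_t$ by separating a change of iterate at fixed smoothing from a change of smoothing at fixed iterate:
\[
\delta_t \;\le\; \|\nabla g_{\beta_t}(Aw_t)-\nabla g_{\beta_t}(Aw_{t-1})\|_1 \;+\; \|\nabla g_{\beta_t}(Aw_{t-1})-\nabla g_{\beta_{t-1}}(Aw_{t-1})\|_1 .
\]
For the first term, separability of $g_{\beta_t}$ and $\beta_t^{-1}$-smoothness of each $g_{\beta_t,j}$, together with $\w_t-\w_{t-1}=\eta_{t-1}(s_{t-1}-\w_{t-1})$ and $\|A(s_{t-1}-\w_{t-1})\|_1\le D_1(A)$, give a bound $\le \eta_{t-1}(m\beta_t)^{-1}D_1(A)=2\sqrt{t+1}\,(m\beta_0 t)^{-1}D_1(A)$. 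For the second term I would use the Moreau-envelope identity $g_{\beta,j}'(u)=\beta^{-1}(u-\prox_{\beta g_j}(u))$ and a short sensitivity estimate — provable from monotonicity of $\partial g_j$ — stating that for $\beta_t\le\beta_{t-1}$,
\[
|g_{\beta_t,j}'(u)-g_{\beta_{t-1},j}'(u)| \;\le\; \big(\tfrac1{\beta_t}-\tfrac1{\beta_{t-1}}\big)\,|u-\prox_{\beta_{t-1}g_j}(u)| ,
\]
so that, summing over $j$ at $u=a_j^T\w_{t-1}$ and using $\|A\w_{t-1}-\prox_{\beta_{t-1}g}(A\w_{t-1})\|_1\le D_1(A)$ (for $g=\delta_\constset$ this residual equals $\|A\w_{t-1}-\proj_\constset(A\w_{t-1})\|_1\le\|A(\w_{t-1}-\optimal\w)\|_1$ by feasibility of $\optimal\w$; the Lipschitz case yields an even smaller, $\bigO(1/t)$, term), the second term is $\le m^{-1}(\tfrac1{\beta_t}-\tfrac1{\beta_{t-1}})D_1(A)\le D_1(A)(2m\beta_0\sqrt t)^{-1}$. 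Combining, $\Expect\delta_t\le c\,D_1(A)\,(m\beta_0\sqrt t)^{-1}$ for a small absolute constant $c$.

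Finally I would plug this into the unrolled recursion and estimate $\sum_{t=1}^{k}(1-\tfrac1m)^{k-t+1}t^{-1/2}$ by splitting at $t=k/2$. For $t\le k/2$ the geometric weight is at most $(1-\tfrac1m)^{k/2}$ and $\sum_{t\le k/2}t^{-1/2}=\bigO(\sqrt k)$, while $\sup_{x\ge0}(1-\tfrac1m)^{x/2}x\le 2m/e$ turns this into a $\bigO(m/\sqrt k)$ contribution; for $t>k/2$ one has $t^{-1/2}\le\sqrt{2/k}$ and $\sum_{t}(1-\tfrac1m)^{k-t+1}\le m$, again $\bigO(m/\sqrt k)$. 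The factor $m$ cancels the $m^{-1}$ in $\Expect\delta_t$, leaving a bound of the form $e_k\le(1-\tfrac1m)^k e_0+C k^{-1/2}$; bookkeeping of the constants in the two drift pieces and in the split gives $C=10\,\beta_0^{-1}D_1(A)$.

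The step I expect to be the main obstacle is controlling the drift $\delta_t$, and within it the change-of-smoothing term: unlike in \citet{negiar2020stochastic}, where the SAG memory tracks a fixed gradient, here the memory entries were computed against earlier (larger) values of $\beta$, so one needs a quantitative sensitivity of $\nabla g_\beta$ in $\beta$ together with a uniform-over-$\domain$ bound on the residual $\|A\w-\prox_{\beta g}(A\w)\|_1$; these two facts are what make the clean $\bigO(k^{-1/2})$ rate (with the stated constant) go through and are the genuinely new ingredient over the non-homotopy SAG analysis. A minor but necessary technical point is verifying that $\delta_t$ is $\mathcal F_{t-1}$-measurable, so the recursion step is legitimate, and noting that all drift estimates above are deterministic worst-case bounds over $\domain$, which makes passing to expectations immediate.
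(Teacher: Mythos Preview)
Your proposal is correct and follows essentially the same route as the paper: set up the contraction $e_k\le(1-\tfrac1m)(e_{k-1}+\Expect\delta_k)$ by conditioning on $\mathcal F_{k-1}$, then bound the drift $\delta_k$ by splitting it into a change-of-iterate piece (handled by $\beta_k^{-1}$-smoothness and $\|A(w_k-w_{k-1})\|_1\le\eta_{k-1}D_1(A)$) and a change-of-smoothing piece (handled via the Moreau-envelope formula and $\|Aw_{k-1}-\prox_{\beta_{k-1}g}(Aw_{k-1})\|_1\le D_1(A)$), arriving at $\Expect\delta_k\le 5D_1(A)(m\beta_0\sqrt k)^{-1}$; this is exactly the paper's argument.

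The one place you diverge is in closing the recursion. The paper packages it as a separate lemma: if $u_k\le\rho(u_{k-1}+C/\sqrt k)$ with $\rho\in(0,1)$, then $u_k\le\rho^k u_1+\tfrac{2C\rho}{(1-\rho)\sqrt k}$, proved via the Chebyshev-sum observation that the weighted average $(\sum_i 1/\sqrt i)^{-1}\sum_i \rho^{k-i+1}/\sqrt i$ is at most the uniform average $k^{-1}\sum_i\rho^{k-i+1}$ (since $\rho^{k-i+1}$ increases in $i$ while the weights $1/\sqrt i$ decrease). You instead unroll and split the sum at $k/2$, controlling the early half through $\sup_{x\ge0}\rho^{x/2}x\le 2m/e$ and the late half by the geometric tail. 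Both arguments are short and give the same $\bigO(m/\sqrt k)$ bound on the sum, hence the same constant $C=10\beta_0^{-1}D_1(A)$ after the $m$ cancels. Your split is perhaps more elementary; the paper's rearrangement trick is a bit slicker and reusable.
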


We refer to \citep{negiar2020stochastic} for the proof of \Cref{main:alpha_err}. We present the proof of \Cref{lem-gamma-error} in \Cref{app:lem-gamma-error-proof} under the assumption that $g$ is an indicator function or a Lipschitz continuous function.

\paragraph{Discussion.}

\Cref{main:alpha_err} shows that the SAG-like estimator provides an error bound in $\ell_1$-norm that decays as $\bigO(1/k)$ in expectation.
This decay does not carry over to the separable case in \hyperref[alg:v2]{Variant~2}, as demonstrated by \Cref{lem-gamma-error}, due to the $\frac{1}{\beta_k}$-factor associated with the smoothed approximation $g_{\beta_k}$. %

\subsection{Convergence Rates}
\label{sec:convergence-rates}

Combining \Cref{main:alpha_err,lem-gamma-error} with \Cref{lem:smooth-gap-rec} gives the convergence rates for the two variants of H-SAG-CGM which we now present.

\begin{thm}\label{main:smoothed_gap_convergence}
The sequence generated by H-SAG-CGM (\Cref{alg:main}) satisfies, for all $k\geq 2$,
\begin{align}
S_{\beta_k}(w_{k+1})
\leq
\frac{C_1}{\sqrt{k}}
+ \frac{C_2}{k}
+ \frac{C_3}{k^2}.\notag
\end{align}
The constants are defined for H-SAG-CGM/v1 as follows:
\begin{itemize}[$\triangleright$,topsep=-1pt,itemsep=-1ex,partopsep=1ex,parsep=1ex]
\item $C_1 = 2 D_\domain^2 \|A\| \beta_0^{-1}$
\item $C_2 = 8 L_f D_1(X) D_{\infty}(X) + 2 n^{-1} L_f \|X\| D_\domain^2$
\item $C_3 = 2 n^2 D_{\infty}(X) \big( \|\grad f(Xw_1) - \alpha_0\|_1  + 32 L_f D_1(X) \big)$
\end{itemize}
and for \hyperref[alg:v2]{Variant 2} as follows:
\begin{itemize}[$\triangleright$,topsep=-1pt,itemsep=-1ex,partopsep=1ex,parsep=1ex]
    \item $C_1 = \beta_0^{-1}(2 D_\domain^2 \|A\| + 10 D_1(A))$.
    \item $C_2 = 8 L_f D_1(X) D_{\infty}(X) + 2 n^{-1} L_f \|X\| D_\domain^2$
    \item $C_3 = 2 n^2 D_{\infty}(X) \big( \|\grad f(Xw_1) - \alpha_0\|_1  + 32 L_f D_1(X) \big) + 2 m^2 D_\infty(A)\|\nabla g_{\beta_0}(Aw_1) - \gamma_0\|_1$
\end{itemize}
\end{thm}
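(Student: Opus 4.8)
The plan is to turn \Cref{lem:smooth-gap-rec} into a closed-form bound by substituting the schedules $\eta_k=2/(k+1)$, $\beta_k=\beta_0/\sqrt{k+1}$, inserting the SAG error estimates of \Cref{main:alpha_err,lem-gamma-error}, and unrolling. First I would control the gradient-estimation error in the recursion. Writing $v_k=X^T\alpha_k+v_k^g$ and $\nabla F_{\beta_k}(w_k)=X^T\nabla f(Xw_k)+A^T\nabla g_{\beta_k}(Aw_k)$, I split
\[
\|\nabla F_{\beta_k}(w_k)-v_k\|\le\|X^T(\nabla f(Xw_k)-\alpha_k)\|+\|A^T\nabla g_{\beta_k}(Aw_k)-v_k^g\|.
\]
For \variantone the second term vanishes (the exact gradient of $g_{\beta_k}$ is used), and for \varianttwo it equals $\|A^T(\nabla g_{\beta_k}(Aw_k)-\gamma_k)\|$. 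In both cases I pass to the $\ell_1$ errors handled by \Cref{main:alpha_err,lem-gamma-error} via the Hölder-type inequalities already used inside the proof of \Cref{lem:smooth-gap-rec}, which replace $D_\domain\|X^T u\|$ by a $D_\infty(X)\|u\|_1$-type quantity (similarly for $A$), and use $D_\infty(\cdot)\le D_1(\cdot)$ where convenient. Together with the explicit $L_{F_{\beta_k}}$ from \Cref{lem:smooth-gap-rec}, taking expectations yields a per-step inequality $S_{\beta_k}(w_{k+1})\le(1-\eta_k)S_{\beta_{k-1}}(w_k)+b_k$ where $b_k$ is an explicit combination of: a geometric term $(1-1/n)^k$ (resp.\ $(1-1/m)^k$) times the initial SAG errors, a $(1-1/n)^{k/2}\log k$ term, a $1/k$ term, and a $\beta_k^{-1}=\beta_0^{-1}\sqrt{k+1}$ term coming from $\nabla g_{\beta_k}$ and the smoothness constant --- each carrying the factor $\eta_k\sim 1/k$ --- plus the smoothness term $\tfrac12\eta_k^2 D_\domain^2 L_{F_{\beta_k}}$.

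Next I would unroll the recursion. Since $1-\eta_k=\tfrac{k-1}{k+1}$, one has $\prod_{i=j+1}^{k}(1-\eta_i)=\tfrac{j(j+1)}{k(k+1)}$, so multiplying the recursion by $k(k+1)$ makes it telescope and gives
\[
S_{\beta_k}(w_{k+1})\le\frac{2}{k(k+1)}S_{\beta_1}(w_2)+\frac{1}{k(k+1)}\sum_{j=2}^{k}j(j+1)\,b_j,\qquad j(j+1)\eta_j=2j.
\]
It then remains to estimate the resulting series: $\sum_j j(1-1/n)^j\le n(n-1)$ (and the analogue with $m$); $\sum_j j(1-1/n)^{j/2}\log j\le\sum_j j^2(1-1/n)^{j/2}=\mathcal O(n^3)$, using $\log j\le j$ and $1-(1-1/n)^{1/2}\ge\tfrac{1}{2n}$; $\sum_{j\le k}(\text{const})=\mathcal O(k)$; and $\sum_{j\le k}\sqrt j\le\tfrac23(k+1)^{3/2}$. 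Dividing by $k(k+1)$ turns these four into $\mathcal O(k^{-2})$, $\mathcal O(k^{-2})$, $\mathcal O(k^{-1})$, and $\mathcal O(k^{-1/2})$ contributions; collecting them gives the $C_1/\sqrt k+C_2/k+C_3/k^2$ shape, and tracking the coefficients ($D_\infty$, $D_1$, $\|A\|$, $\|X\|$, $L_f$, $\beta_0$, and the three powers of $n$ from the geometric ratio $\sqrt{1-1/n}$) reads off the stated constants: the $\beta_k^{-1}$ smoothness term (and, for \varianttwo, the $\gamma$-variance $C/\sqrt j$ term) feed $C_1$; the $\mathcal O(1/k)$ term of \Cref{main:alpha_err} and the $\eta_k^2\|X\|L_f/n$ smoothness term feed $C_2$; the geometric/initial-error terms and the $\log$ term feed $C_3$.

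For the base case, the schedule gives $\eta_1=1$, so at $k=1$ the recursion reads $S_{\beta_1}(w_2)\le b_1$; since \Cref{main:alpha_err,lem-gamma-error} are only valid for $k\ge2$, I would bound the $k=1$ SAG errors directly from the update rules \eqref{main:alpha_update_rule}--\eqref{main:gamma_update_rule} (one coordinate is exact, the rest carry the initial error), which is exactly how $\|\nabla f(Xw_1)-\alpha_0\|_1$ and, for \varianttwo, $\|\nabla g_{\beta_0}(Aw_1)-\gamma_0\|_1$ enter $C_3$; I would also check that every series estimate above holds as an inequality for each $k\ge2$, not merely asymptotically, possibly after mild inflation of constants.

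I expect the bookkeeping in the unrolling step to be the main obstacle --- in particular, preventing the $\log k$ factor of \Cref{main:alpha_err} from surfacing in the final rate. The point that rescues the clean $1/\sqrt k+1/k+1/k^2$ form is that this term is weighted by the extra $\eta_k\sim1/k$, so after multiplication by $j(j+1)$ one is left with $\sum_j j(1-1/n)^{j/2}\log j$, which $\log j\le j$ converts into the convergent sum $\sum_j j^2(1-1/n)^{j/2}=\mathcal O(n^3)$; it then lands in $C_3/k^2$, with the ratio $\sqrt{1-1/n}$ supplying precisely the three powers of $n$. A lesser nuisance is being careful with the norm conversions so that $D_1$ and $D_\infty$ --- rather than $D_\domain$ and operator norms --- appear in the constants; this just reuses the Hölder steps from the proof of \Cref{lem:smooth-gap-rec}.
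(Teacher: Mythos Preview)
Your proposal is correct and follows essentially the same route as the paper: bound the inner-product variance term via H\"older with the $\ell_1/\ell_\infty$ pairing to obtain $D_\infty(X)\|\nabla f(Xw_k)-\alpha_k\|_1$ (and the analogous $A$-term for \varianttwo), multiply the recursion by $k(k+1)$ and telescope, then control the four resulting series with the elementary bounds $\sum_i i\rho^i<n^2$, $\sum_i i\rho^{i/2}\log i<16n^3$ (via $\log i<i+1$), $\sum_{i\le k}1=k$, $\sum_{i\le k}\sqrt i\le k^{3/2}$, and finally divide back by $k(k+1)$. One small point of precision: the $D_\infty(X)$ constants require applying H\"older to the \emph{inner product} $\langle\nabla F_{\beta_k}(w_k)-v_k,\,s_k-w^\star\rangle$ that appears in the derivation of \Cref{lem:smooth-gap-rec}, not to the norm $D_\domain\|\nabla F_{\beta_k}(w_k)-v_k\|$ in its statement (the latter would only yield $D_\domain\|X\|$); you describe this correctly but should make the substitution explicit.
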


Using the techniques described in \citep{trandinh2018smooth}, we translate this bound to convergence guarantees on the original problem in the following corollaries.

\begin{corollary}\label{main:final_convergence_Lipschitz}
Suppose $g:\R^m \to \R$ is $L_g$-Lipschitz continuous. Then, the estimates generated by H-SAG-CGM (\Cref{alg:main}) satisfy 
\begin{align}
    \Expect[F(w_{k+1}) - \optimal F] \leq \frac{C_1}{\sqrt{k}}
+ \frac{C_2}{k}
+ \frac{C_3}{k^2}\notag
 + \frac{\beta_0 L_g^2}{2\sqrt{k}}
\end{align}
where the constants $C_1, C_2$ and $C_2$ are defined in \Cref{main:smoothed_gap_convergence}.

\end{corollary}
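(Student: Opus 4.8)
The plan is to pass from the smoothed-gap estimate of \Cref{main:smoothed_gap_convergence} to a bound on the true suboptimality $\Expect[F(w_{k+1})-\optimal F]$ by controlling the gap $F_{\beta_k}-F$ in terms of the Lipschitz constant $L_g$. The whole argument is the standard Nesterov-smoothing ``sandwich'' plus bookkeeping, so I expect no real obstacle; the only point deserving care is the Lipschitz-to-bounded-domain implication for the conjugate and the appeal to Fenchel--Moreau.

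First I would record the sandwich inequality. Since $g$ is $L_g$-Lipschitz, proper, convex and (lower semi)continuous, its Fenchel conjugate $\dual{g}$ is supported on $\{y : \|y\|\le L_g\}$: for $\|y\|>L_g$, testing $x = t\,y/\|y\|$ in $\dual{g}(y)=\sup_x \innerp{x,y}-g(x)$ gives a value $\ge t(\|y\|-L_g)-g(0)\to\infty$. Hence in the definition $g_\beta(z)=\sup_y \innerp{y,z}-\dual{g}(y)-\frac{\beta}{2}\|y\|^2$ the supremum may be taken over $\|y\|\le L_g$ without change, which yields on one hand $g_\beta(z)\le \sup_y \innerp{y,z}-\dual{g}(y)=g^{\ast\ast}(z)=g(z)$ (Fenchel--Moreau), and on the other hand $g_\beta(z)\ge \sup_{\|y\|\le L_g}\big(\innerp{y,z}-\dual{g}(y)\big)-\frac{\beta}{2}L_g^2 = g(z)-\frac{\beta}{2}L_g^2$. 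Evaluating at $z=\A w$ and adding $\frac1n\sum_i f_i(\x_i^T w)$ gives, for every $w\in\domain$ and every $\beta>0$,
\[
F(w)\ \le\ F_\beta(w) + \tfrac{\beta}{2}L_g^2 .
\]

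Next I would specialize to $w=w_{k+1}$, $\beta=\beta_k$ and take expectations, using the definition $S_{\beta_k}(w_{k+1})=\Expect[F_{\beta_k}(w_{k+1})-\optimal F]$:
\[
\Expect[F(w_{k+1})-\optimal F]\ \le\ S_{\beta_k}(w_{k+1}) + \tfrac{\beta_k}{2}L_g^2 .
\]
Finally, substitute the bound $S_{\beta_k}(w_{k+1})\le \frac{C_1}{\sqrt k}+\frac{C_2}{k}+\frac{C_3}{k^2}$ from \Cref{main:smoothed_gap_convergence} for the first term, and $\beta_k=\beta_0/\sqrt{k+1}\le\beta_0/\sqrt{k}$ for the second, which produces exactly the claimed inequality with the stated constants $C_1,C_2,C_3$. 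The main (mild) subtlety is justifying that the supremum defining $g_\beta$ can be restricted to $\operatorname{dom}\dual{g}$ and that $g^{\ast\ast}=g$, i.e., invoking closedness/properness of $g$; everything else is a direct substitution.
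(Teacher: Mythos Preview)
Your proposal is correct and follows essentially the same route as the paper: both invoke the Lipschitz ``sandwich'' inequality $g_\beta(z)\le g(z)\le g_\beta(z)+\tfrac{\beta}{2}L_g^2$ to pass from $F$ to $F_{\beta_k}$, then apply the smoothed-gap bound of \Cref{main:smoothed_gap_convergence} together with $\beta_k=\beta_0/\sqrt{k+1}\le\beta_0/\sqrt{k}$. The only difference is that the paper simply cites this sandwich inequality (\citealp{nesterov05}, Eq.~(2.7)) while you prove it from scratch via the bounded support of $\dual{g}$ and Fenchel--Moreau; this is the very ``duality between bounded domain and Lipschitz continuity'' the paper alludes to.
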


\begin{corollary}\label{main:final_convergence}
Suppose $g$ is the indicator function of a closed and convex set $\mathcal{K}$.
Then, for H-SAG-CGM (\Cref{alg:main}), we have a lower bound on the suboptimality as $\Expect \left[ f(Xw_{k+1}) - f(Xw^*)\right] \geq  -\|y^*\|\Expect\left[\dist(Aw_{k+1}, \constset)\right]$ and the following upper bounds on the suboptimality and feasibility:
\begin{align*}
    \Expect\left[ f(Xw_{k+1}) - f(Xw^*)\right] & \leq \frac{C_1 + \beta_0}{\sqrt{k}} + \frac{C_2}{k}+ \frac{C_3}{k^2},\enspace\text{and}\enspace \\
    \Expect\left[\dist(Aw_{k+1}, \constset)\right] 
    & \leq \frac{C_4}{\sqrt{k}}  
    + \frac{\sqrt{2C_2}}{k^{3/4}} + \frac{\sqrt{2C_3}}{k^{5/4}}
\end{align*}
where the constants $C_1, C_2$ and $C_3$ are defined in \Cref{main:smoothed_gap_convergence} and ${C_4 = (\frac{3\beta_0\|y^*\|}{2} + \sqrt{2C_1})}$.
\end{corollary}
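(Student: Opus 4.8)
The plan is to deduce \Cref{main:final_convergence} from the smoothed-gap estimate of \Cref{main:smoothed_gap_convergence} together with a saddle-point inequality supplied by strong duality, in the style of \citet{trandinh2018smooth}. Write $f(X\w)\defeq\frac1n\sum_i f_i(\x_i^T\w)$ and $\Delta_k\defeq\frac{C_1}{\sqrt k}+\frac{C_2}{k}+\frac{C_3}{k^2}$, so that \Cref{main:smoothed_gap_convergence} reads $S_{\betak}(\w_{k+1})=\Expect[F_{\betak}(\w_{k+1})]-\optimal F\le\Delta_k$. Because $g=\delta_\constset$ and $\optimal\w$ is feasible, $\optimal F=f(X\optimal\w)$; and for an indicator, $g_{\betak}(A\w)=\frac{1}{2\betak}\dist(A\w;\constset)^2\ge 0$, so $F_{\betak}(\w_{k+1})\ge f(X\w_{k+1})$. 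Taking expectations, the smoothed-gap bound already gives the suboptimality upper bound $\Expect[f(X\w_{k+1})]-f(X\optimal\w)\le\Delta_k$ (the additive $\beta_0/\sqrt k$ appearing in the statement is the small correction from carrying $g_{\betak}$ rather than $g$ through the recursion of \Cref{lem:smooth-gap-rec}).

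For the matching lower bound I would invoke the standing strong-duality assumption: let $\optimal y$ attain the dual optimum, so that the saddle-point relation gives, for every $\w\in\domain$, $f(X\w)+\ip{\optimal y}{A\w}-\sigma_\constset(\optimal y)\ge f(X\optimal\w)$, where $\sigma_\constset$ denotes the support function of $\constset$. Using $\sigma_\constset(\optimal y)\ge\ip{\optimal y}{\proj_\constset(A\w)}$ (valid since $\proj_\constset(A\w)\in\constset$) and Cauchy--Schwarz on $\ip{\optimal y}{A\w-\proj_\constset(A\w)}$ yields the pointwise estimate $f(X\w)-f(X\optimal\w)\ge-\|\optimal y\|\,\dist(A\w;\constset)$; specializing to $\w=\w_{k+1}$ and taking expectations gives the stated lower bound.

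It remains to control feasibility, which I would obtain by combining the two ingredients. From the smoothed-gap estimate, $\frac{1}{2\betak}\Expect[\dist(A\w_{k+1};\constset)^2]\le\Delta_k-\big(\Expect[f(X\w_{k+1})]-f(X\optimal\w)\big)\le\Delta_k+\|\optimal y\|\,\Expect[\dist(A\w_{k+1};\constset)]$, where the last step uses the lower bound just derived. Setting $d_k\defeq\Expect[\dist(A\w_{k+1};\constset)]$ and invoking Jensen ($\Expect[\dist^2]\ge d_k^2$) turns this into the scalar quadratic inequality $\frac{1}{2\betak}d_k^2-\|\optimal y\|\,d_k-\Delta_k\le 0$, hence $d_k\le\betak\|\optimal y\|+\sqrt{\betak^2\|\optimal y\|^2+2\betak\Delta_k}\le 2\betak\|\optimal y\|+\sqrt{2\betak\Delta_k}$. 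Substituting $\betak=\beta_0/\sqrt{k+1}$ and the explicit form of $\Delta_k$, using $\sqrt{a+b}\le\sqrt a+\sqrt b$ and $k+1\ge k$, and grouping terms by their power of $k$ produces a $k^{-1/2}$ contribution (from $C_1/\sqrt k$ together with the $\betak\|\optimal y\|$ term), a $k^{-3/4}$ contribution (from $C_2/k$), and a $k^{-5/4}$ contribution (from $C_3/k^2$) — exactly the shape of the claimed bound, with $C_4=\frac{3\beta_0\|\optimal y\|}{2}+\sqrt{2C_1}$ after the constants are accounted for.

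The main obstacle is precisely this last bookkeeping: propagating $\betak=\beta_0/\sqrt{k+1}$ through the quadratic-formula solution and matching every constant and every exponent of $k$ against the statement, while being careful that Jensen's inequality is applied in the right direction and that the saddle-point step is justified for an arbitrary closed convex $\constset$ via its support function, not just for equality constraints. Everything else is routine.
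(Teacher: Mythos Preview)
Your proposal is correct and follows essentially the same route as the paper: the Lagrangian/saddle-point inequality for the lower bound (the paper writes it with an explicit slack variable $r\in\constset$ and sets $r=\proj_\constset(Aw_{k+1})$, which is equivalent to your support-function argument), dropping the nonnegative $g_{\betak}$ term for the upper bound, and then combining the two into the same scalar quadratic in $\Expect[\dist(Aw_{k+1},\constset)]$ solved via Jensen and $\sqrt{a^2+b^2}\le a+b$. One small remark: your explanation of the extra $\beta_0/\sqrt{k}$ in the suboptimality upper bound as a ``correction from carrying $g_{\betak}$ rather than $g$'' is not quite the mechanism---the paper's proof simply bounds the objective residual by $S_{\betak}(w_{k+1})$ directly, and the additional $\beta_0/\sqrt{k}$ in the stated corollary is a slack the authors insert (their proof text does not derive it either); it does not affect the argument.
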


\paragraph{Discussion.}
Even in the deterministic setting studied in \citep{yurtsever2018conditional}, the convergence rates of Homotopy CGM is bounded below by $\Omega(1/\sqrt{k})$, as demonstrated theoretically in \citep{lan2013complexity} and practically in \citep{Kerdreux2021LocalAG}. \Cref{main:final_convergence_Lipschitz,main:final_convergence} show that both variants of our algorithm achieves this lower bound.

H-SAG-CGM/v1
provides an order of magnitude improvement (from $\bigO(\varepsilon^{-3})$ to $\bigO(\varepsilon^{-2})$) over the previous state-of-the-art in deterministic constraints, \citet{locatello_stochastic_2019}.

While H-SAG-CGM/v2 and H-SPIDER-FW \citet{vladarean2020conditional} enjoy a similar overall rate, the latter requires an exponentially increasing batch size. Combined with occasional full passes, this quickly becomes impractical for strongly constrained problems. As an alternative, \citet{vladarean2020conditional} propose H-1SFW which does use a fixed batch size but at the cost of an impractical~$\bigO (\varepsilon^{-6})$ rate. In stark contrast, our algorithm enjoys the optimal rate without resorting to increasing the batch size.

\section{NUMERICAL EXPERIMENTS}

\if 0
\begin{figure*}[t]
    \centering
    \begin{minipage}{.24\linewidth}
        \includegraphics[trim={10mm 60mm 5mm 60mm},clip,width=\linewidth]{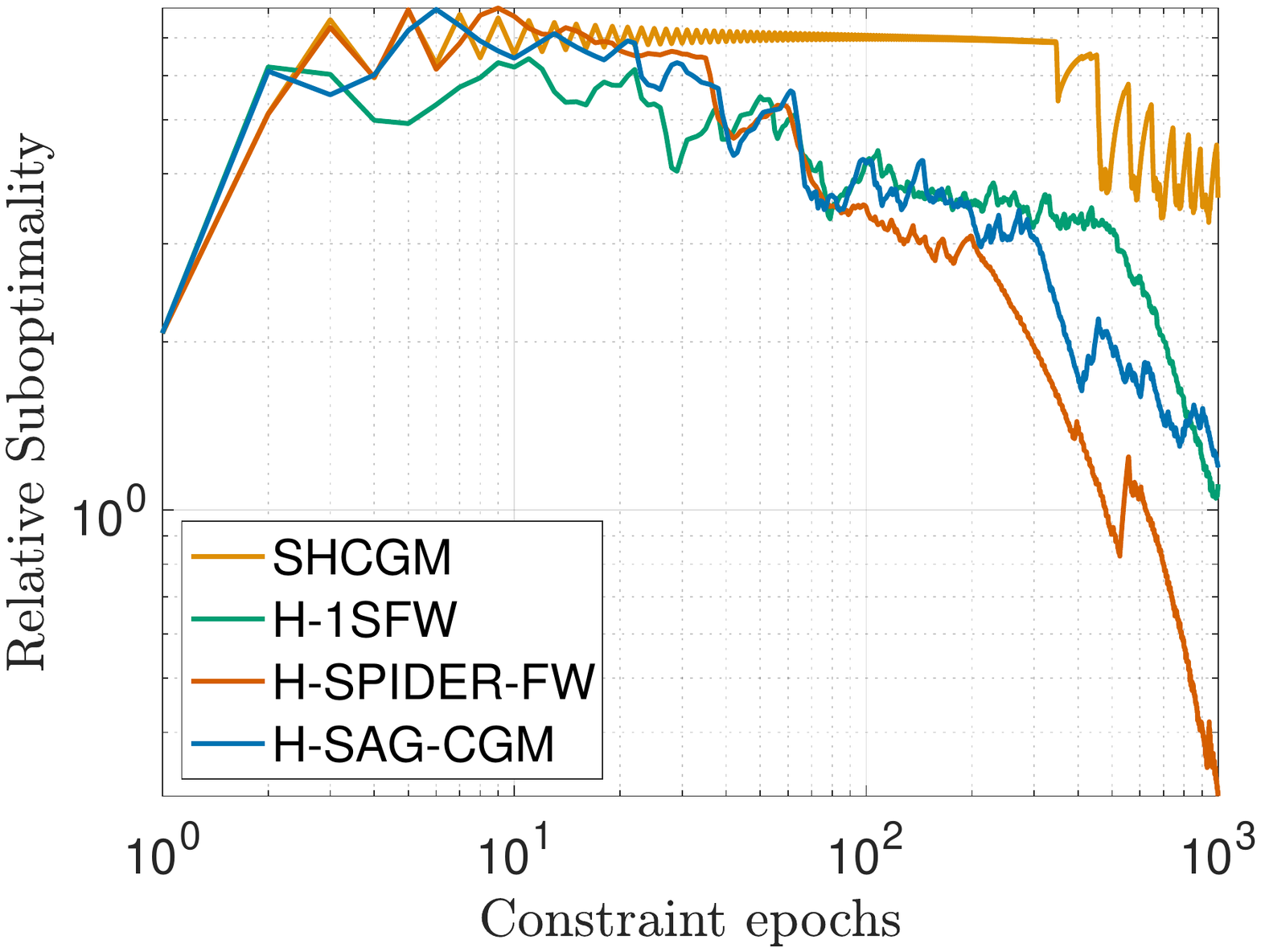}
    \end{minipage}
    \begin{minipage}{.24\linewidth}
        \includegraphics[trim={10mm 60mm 5mm 60mm},clip,width=\linewidth]{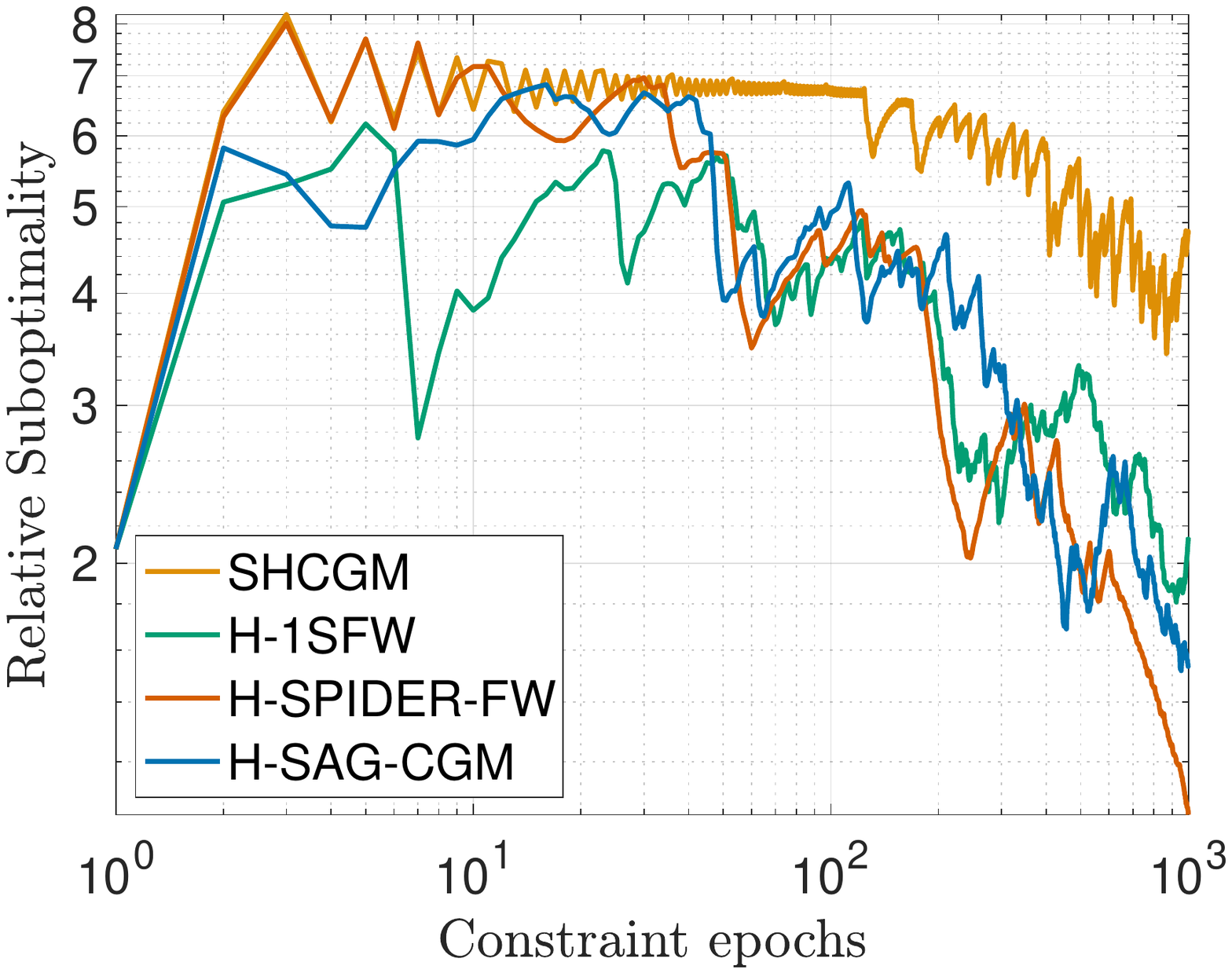}
    \end{minipage}
    \begin{minipage}{.24\linewidth}
        \includegraphics[trim={10mm 60mm 5mm 60mm},clip,width=\linewidth]{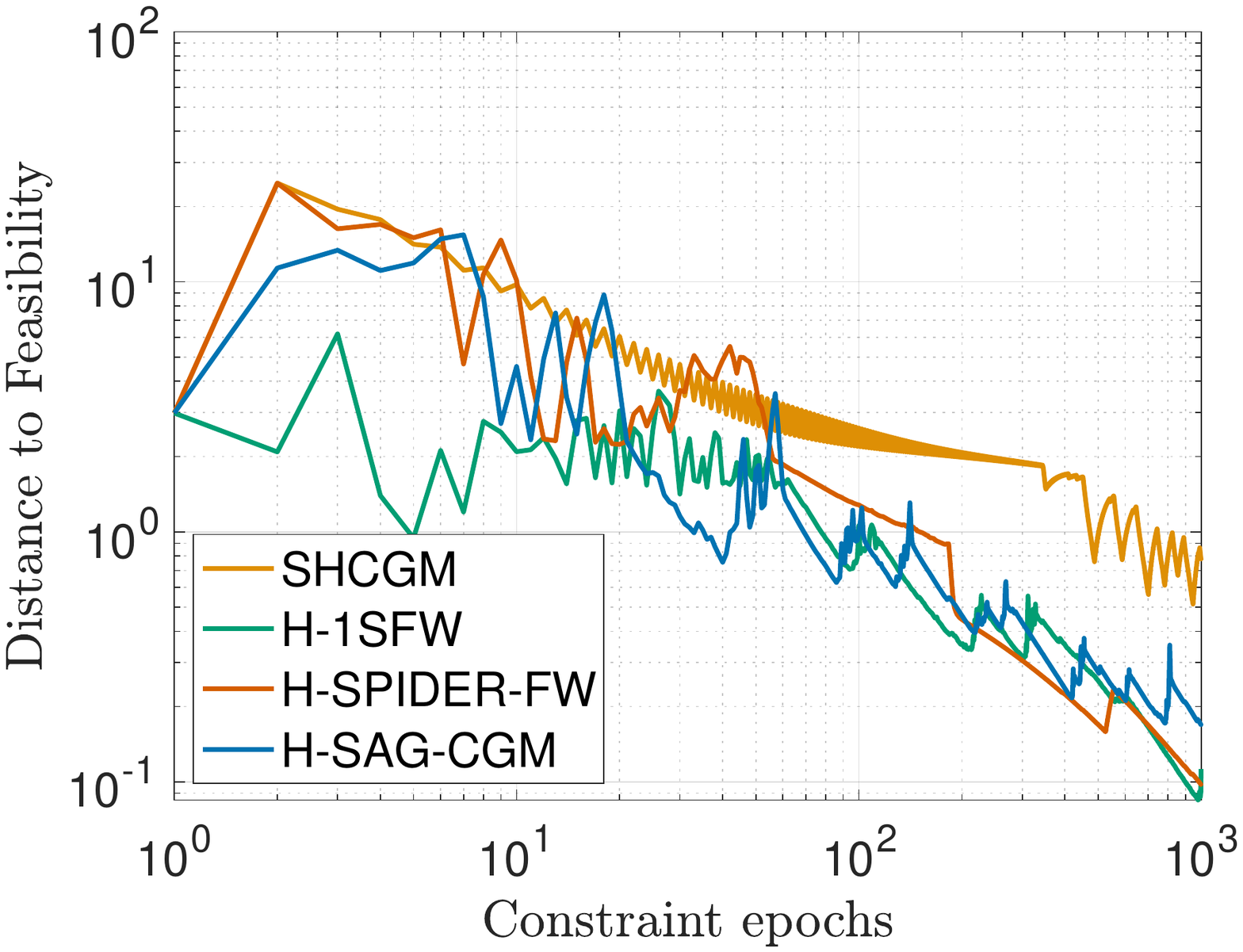}
    \end{minipage}
    \begin{minipage}{.24\linewidth}
        \includegraphics[trim={10mm 60mm 5mm 60mm},clip,width=\linewidth]{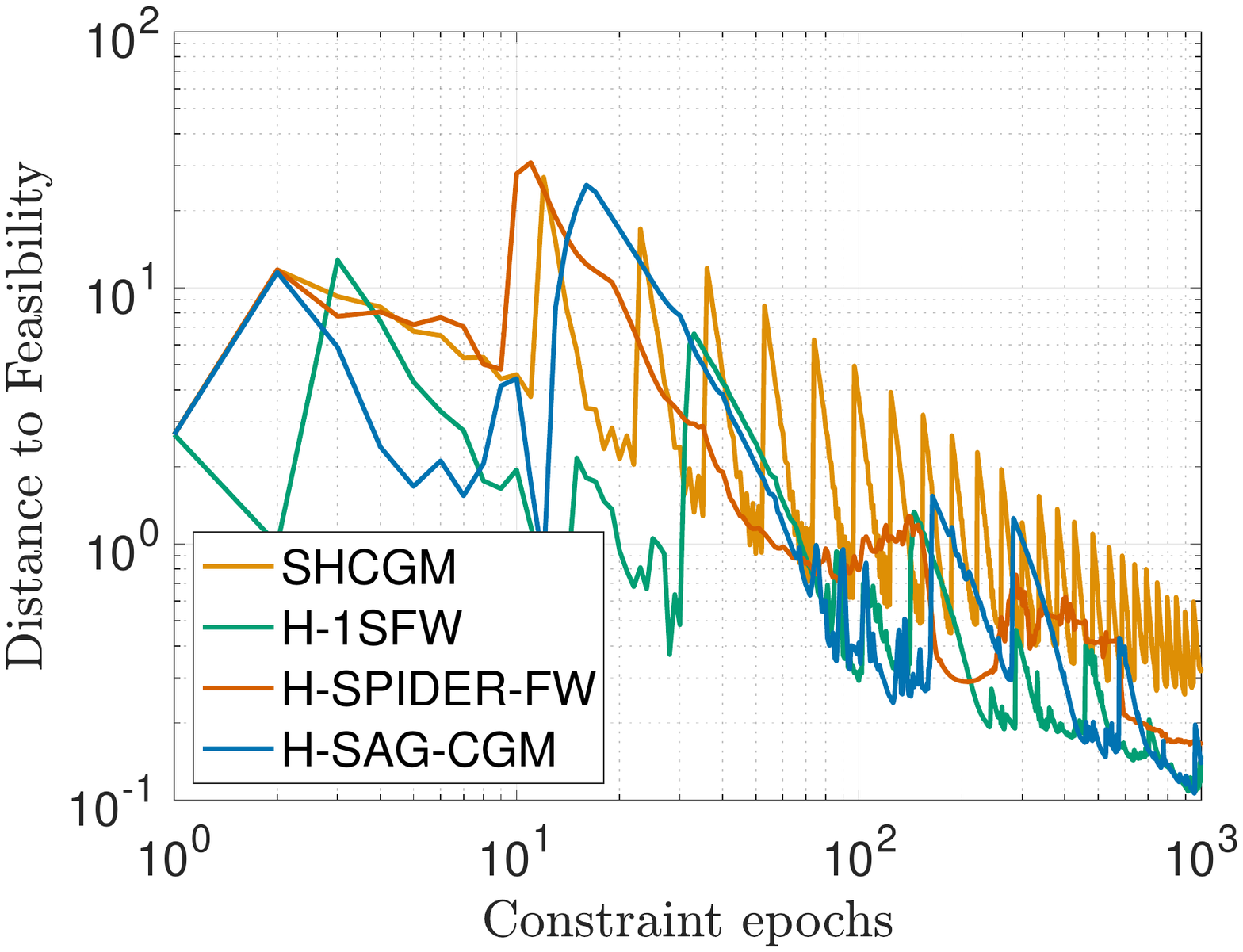}
    \end{minipage}
    
    \begin{minipage}{.49\linewidth}
        \includegraphics[width=\linewidth]{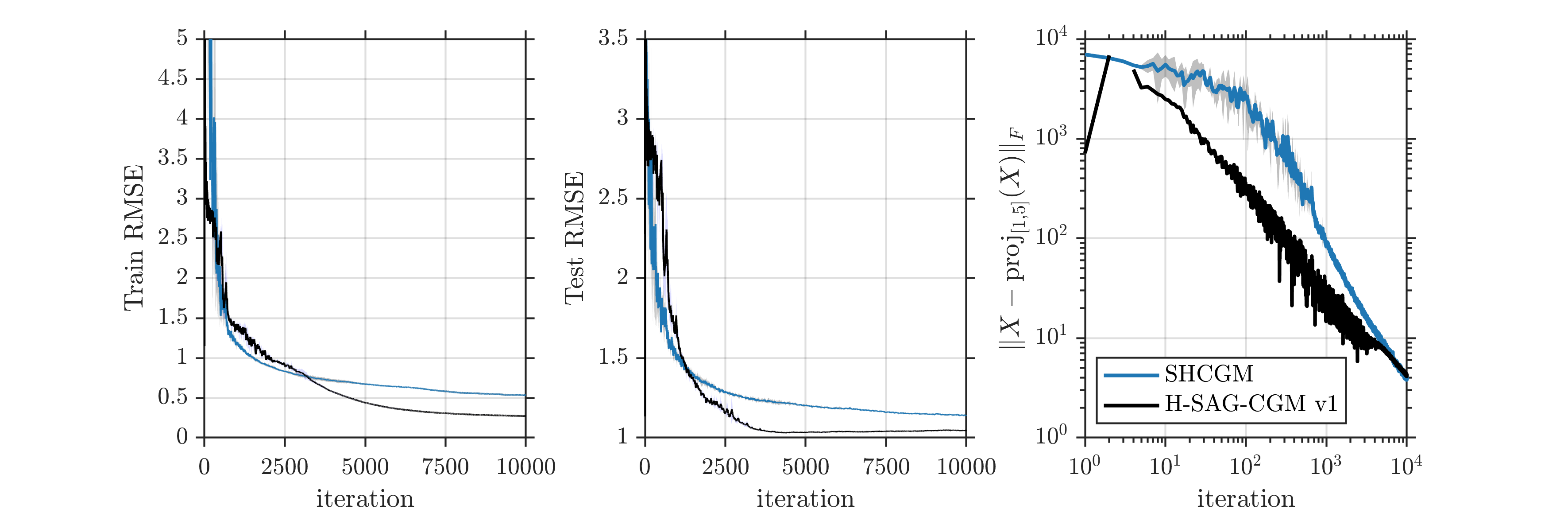}
    \end{minipage}
    \begin{minipage}{.49\linewidth}
    \includegraphics[width=.49\linewidth]{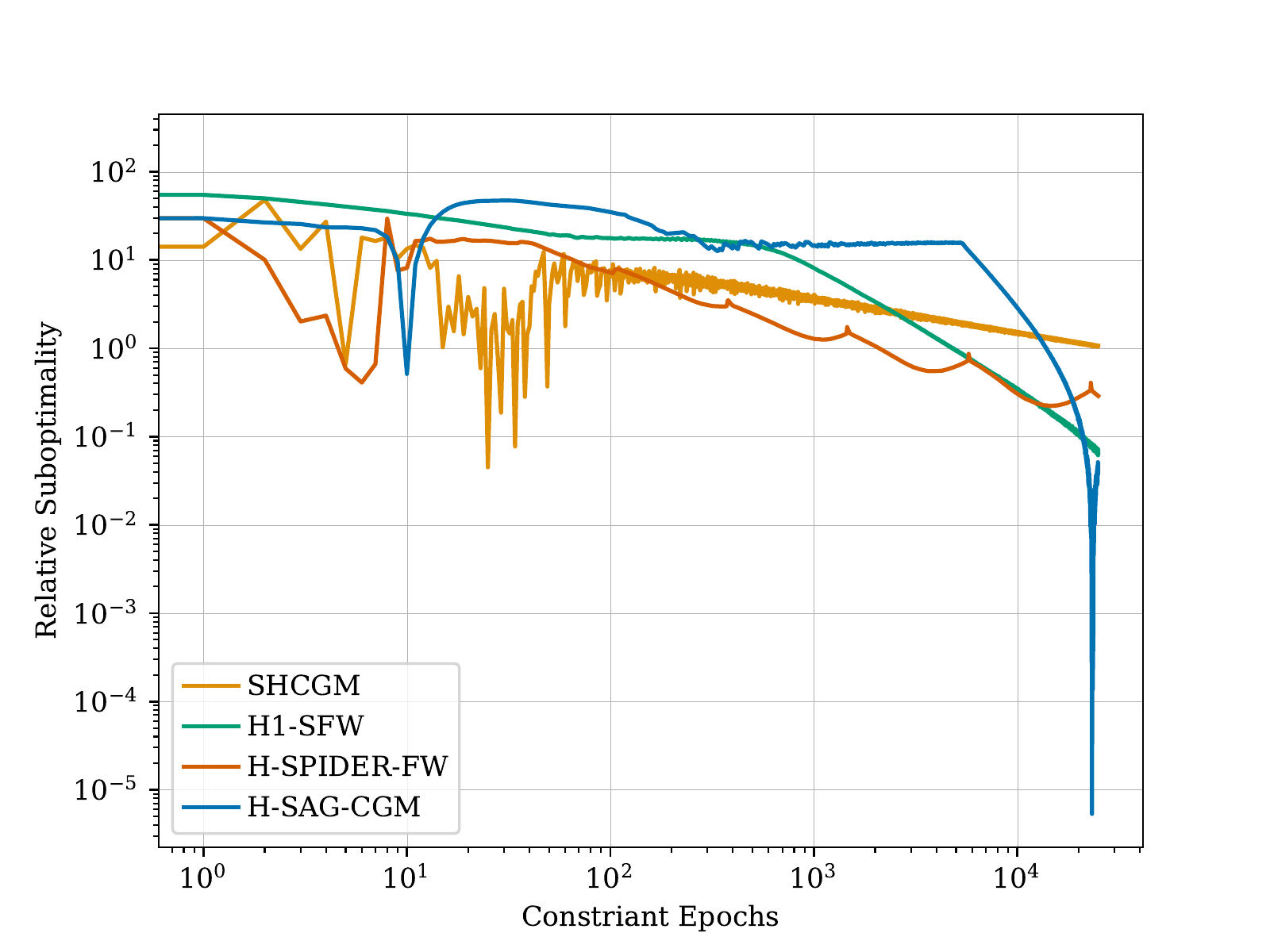}
    \includegraphics[width=.49\linewidth]{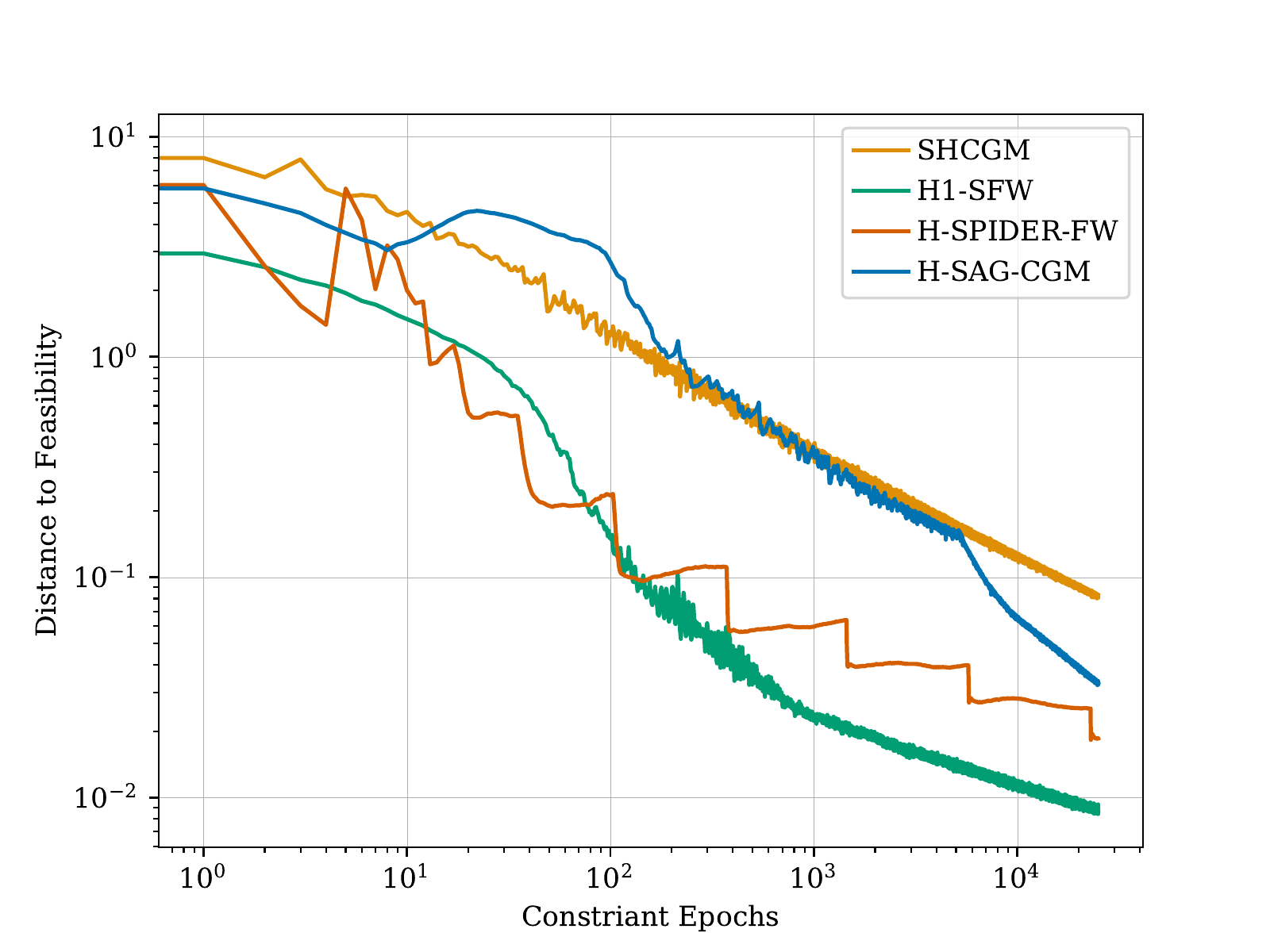}
    \end{minipage}
    \caption{\small \emph{Top row from left to right}: the first two are Constraint Epochs vs.\ Relative Suboptimality for \texttt{5e2} and \texttt{1e4} constraints respectively, the second two are Constraint Epochs vs.\ Distance to Feasibility again for \texttt{5e2} and \texttt{1e4} constraints respectively. \emph{Bottom left}: H-SAG-CGM v1 for deterministic constraints applied to matrix completion on the \textsc{Movielens 100K} dataset. \emph{Bottom right}: H-SAG-CGM v2 for randomized constraints applied to $k$-means clustering of \textsc{MNIST} digits.}\label{fig:all_results}
\end{figure*}
\fi

\if 0
\begin{figure*}[ht]
\centering
\begin{minipage}{.24\linewidth}
    \includegraphics[trim={10mm 60mm 5mm 60mm},clip,width=\linewidth]{figs/synthetic_obj_5e2.pdf}
\end{minipage}
\begin{minipage}{.24\linewidth}
    \includegraphics[trim={10mm 60mm 5mm 60mm},clip,width=\linewidth]{figs/synth1e4_final_obj.pdf}
\end{minipage}
\begin{minipage}{.24\linewidth}
    \includegraphics[trim={10mm 60mm 5mm 60mm},clip,width=\linewidth]{figs/synthetic_const_5e2.pdf}
\end{minipage}
\begin{minipage}{.24\linewidth}
    \includegraphics[trim={10mm 60mm 5mm 60mm},clip,width=\linewidth]{figs/synth1e4_final_constr.pdf}
\end{minipage}
\caption{
From left to right: the first two are Constraint Epochs vs.\ Relative Suboptimality for \texttt{5e2} and \texttt{1e4} constraints respectively, the second two are constraint epochs vs.\ Distance to Feasibility again  for \texttt{5e2} and \texttt{1e4} constraints respectively.}
\label{fig:synthetic}
\end{figure*}
\begin{figure*}[ht]
    \centering
    \includegraphics[width=.63\linewidth]{figs/matrix-completion-movielens-100k.png}
    \caption{H-SAG-CGM v1 for deterministic constraints applied to matrix completion on the \textsc{Movielens 100K} dataset}
    \label{fig:movielens}
\end{figure*}
\begin{figure*}
    \centering
    \begin{minipage}{.3\linewidth}
    \includegraphics[width=\linewidth]{figs/clustering_objective.pdf}
    \end{minipage}
    \begin{minipage}{.3\linewidth}
    \includegraphics[width=\linewidth]{figs/clustering_constraints.pdf}
    \end{minipage}
    \caption{H-SAG-CGM v2 for randomized constraints applied to $k$-means clustering of \textsc{MNIST} digits}
    \label{fig:clustering}
\end{figure*}
\fi

\begin{figure*}[!t]
    \centering
    \includegraphics[width=\linewidth]{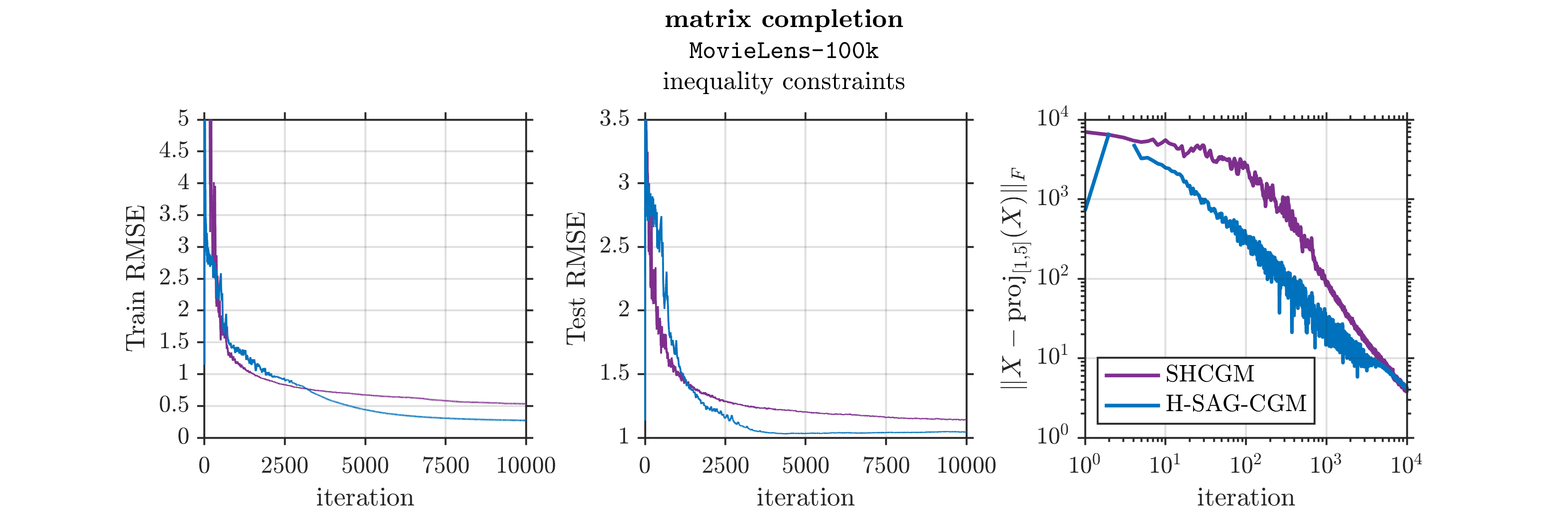}
    \caption{Empirical comparison of H-SAG-CGM/v1 with SHCGM on matrix completion with inequality constraints~\eqref{eq:matrix-completion} with the \texttt{MovieLens-100k} dataset.}
    \label{fig:matrix-completion-experiment}
\end{figure*}

\begin{figure*}[!t]
    \centering
    \includegraphics[width=\linewidth]{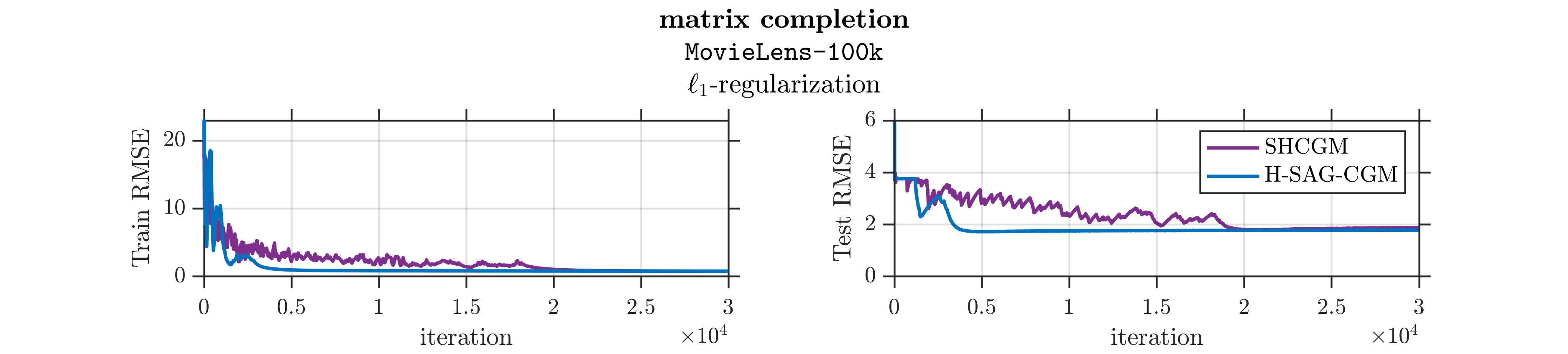}
    \caption{Empirical comparison of H-SAG-CGM/v1 with SHCGM on matrix completion with $\ell_1$-regularization~\eqref{eq:matrix-completion-l1} with the \texttt{MovieLens-100k} dataset.}
    \label{fig:matrix-completion-experiment-l1}
\end{figure*}

\begin{figure*}[!t]
    \centering
    \includegraphics[width=\linewidth]{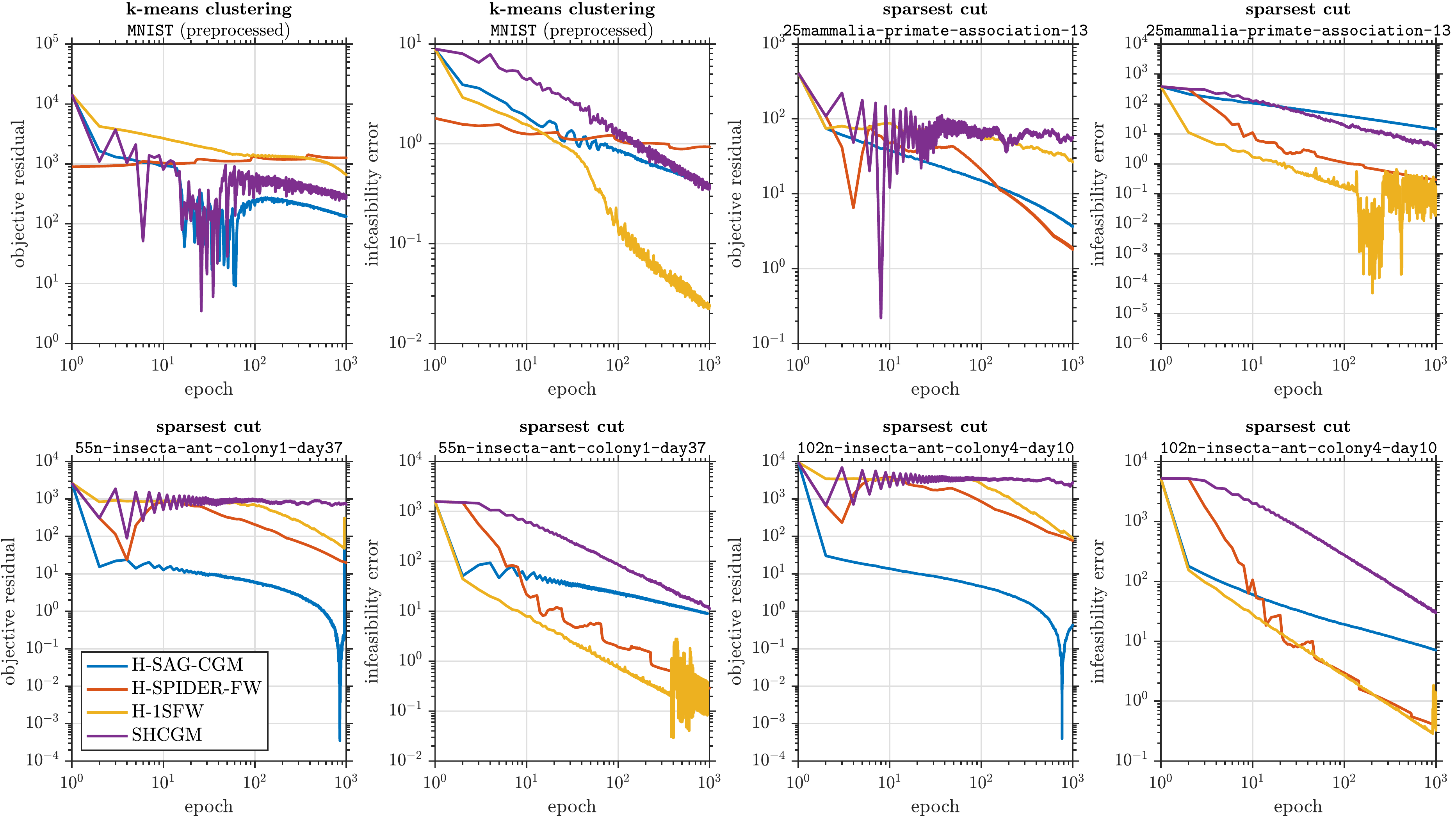}
    \caption{Comparing H-SAG-CGM/v2 to state-of-the-art baselines on two distinct SDP-relaxation tasks, \mbox{$k$-means}~\eqref{eq:kmeans} and sparsest cut \eqref{eq:sparsestcut}. The x-scale is in terms of the constraint epochs. One constraint epoch corresponds to a full pass over all the constraints. Note that for the $k$-means clustering experiment, we deliberately restricted H-SPIDER-FW to not perform full passes over all of the constraints resulting in noticeable degradation in performance.}
    \label{fig:sdp-experiments}
\end{figure*}
 
This section demonstrates the empirical performance of the proposed method across a number of different problems: matrix completion, k-means clustering and uniform sparsest cut. We performed these experiments in MATLAB R2019b and the codes are publicly available at \url{https://github.com/ratschlab/faster-hcgm-composite}.

\paragraph{Baselines.}
We compare the proposed method against the following methods: \\[0.25em]
$\triangleright$~SHCGM \citep{locatello_stochastic_2019}\\
$\triangleright$~H-SPIDER-FW \citep{vladarean2020conditional}\\
$\triangleright$~H-1SFW \citep{vladarean2020conditional}

Note that SHCGM only works in the case of deterministic $g$ and is hence a natural baseline comparison for H-SAG-CGM/v1. H-SPIDER-FW can handle stochastic $g$ so it is used to compare to H-SAG-CBM/v2 but importantly in this case, H-SPIDER-FW requires an increasing batch size.

\paragraph{Challenges.}
The parameter $\beta$ determines a trade-off between convergence in the objective residual and the infeasibility error. However, since we do not know the optimal value a priori, $\beta_0$ is not always easy to interpret given a particular task. This leaves practitioners to develop the intuition on how to tune this parameter. This challenge is not unique to H-SAG-CGM but is shared among homotopy CGM approaches \citep{yurtsever2018conditional,locatello_stochastic_2019,vladarean2020conditional}.
Automating $\beta_0$-tuning is an important direction for future research.

\if 0
\subsection{Synthetic SDP Problems}

We first aim to test the efficacy of our randomized algorithm in a controlled synthetic setting. Consider the following SDP problem which closely resembles \eqref{main:basic_sdp_def}.
\begin{equation}
\begin{array}{ll}
\displaystyle\min_{\w\in\mathcal X}
\langle \w, \C\rangle
\quad\text {subject to}\quad \langle \w, \A_i\rangle = \bb_i,\; i=1,\ldots, m;
\end{array}
\end{equation}
where $\mathcal X= \{\w\in\mathbb S^{n\times n}_+ \mid \Tr(\w)\leq\frac{1}{n}\}$. We set \mbox{$n = \texttt{20}$} and examined two settings~\mbox{$m = \texttt{1e2,1e3}$}~for the number of constraints. Then, we sampled random matrices~$\C,\X,\A_1,\ldots,\A_m$ from the uniform distribution over~$[0,1]$ and set $\optimal \w = n^{-3}\X^T\X$. Finally, we defined the constraint values as~\mbox{$b_i = \innerp{\optimal\w,\A_i}$}~for~$i=1,\ldots,m$.

We used the same values for the parameter $\beta_0$ as described in \citet{vladarean2020conditional}, namely~\mbox{\texttt{1e-7,1e-7,1e-5}} for SHCGM, H-1SFW, and H-SPIDER-FW respectively. For our method we found that \texttt{1e-7} worked well.
For SHCGM, H-1SFW, and our H-SAG-CGM, we chose a batch size of $1\%$ of the input data. Namely, at each iteration we sampled $1\%$ of the $n^2$ decision variables for SHCGM and we sampled $1\%$ of the decision variables and $1\%$ of the equality constraints for H-1SFW and H-SAG-CGM. Our results are summarized in \Cref{fig:all_results}.

\paragraph{Interpretation.}
H-SPIDER-FW \citep{vladarean2020conditional} provided a strong baseline. In these experiments, we used the strongest finite-sum H-SPIDER-FW formulation which makes a full pass over the constraints every~$\floor{\log_2(k)}$ iterations. In addition, their algorithm processes~$\min\{m,2^{\floor{\log_2(k)}}\}$ constraints at each iteration.
We measure progress on the scale of constraint epochs and found that H-SPIDER-FW was very strong despite making fewer CGM updates than our H-SAG-CGM. However, this assumes that the problem is not so strongly constrained that all of the (in)equality constraints can be processed at all. In realistic problems, this is not always possible.

\fi

\subsection{Matrix Completion}

We consider two different formulations of the matrix completion problem. First, we focus on matrix completion with hard inequality constraints studied in \citet{locatello_stochastic_2019}:
\begin{equation}\label{eq:matrix-completion}
\hspace{-3mm}
\min_{\|\w\|_\star\leq \zeta}
\sum_{(i, j) \in \Omega}\left(\w_{ij}-X_{ij}\right)^{2}\; \text{subject to} \; 1 \leq \w \leq 5
\end{equation}
where $\Omega$ is the observed entries of the input data $X$, and~$\|X\|_\star$~denotes the nuclear norm. The inequality constraints $1\leq\w\leq 5$ are hard thresholds which specify that all the entries of $\w$ must lie between $1$ and $5$.

For $X$, we used the \textsc{Movielens-100k} dataset\footnote{F.M.\ Harper, J.A.\ Konstan.\ --- Available at https://grouplens.org/datasets/movielens} containing approximately 100,000 integer valued movie ratings between $1$ and $5$, assigned by 1682 users to 943 movies. We used the \texttt{ub.train} and \texttt{ub.test} partitions provided with the original data for the train/test split.

This numerical setup was studied also in \citet{locatello_stochastic_2019}. We used the parameter setting that they reported without any further tuning. We set ${\zeta=\texttt{7e3}}$ for the nuclear norm bound, ${\beta_0 = \texttt{10}}$ for the initial smoothing parameter, and we compute gradient estimators with $\texttt{1000}$ \textit{iid} samples at each iteration.

\Cref{fig:matrix-completion-experiment} compares the performance of H-SAG-CGM/v1 against SHCGM \citep{locatello_stochastic_2019} in terms of train and test root mean squared error (RMSE) and infeasibility error. The comparison is based on the iteration counter, which is an arguably fair representation of the time cost of the algorithms since both methods use the same number of samples per iteration.

Next, we test our algorithm for a setting in which $g$ is Lipschitz continuous by performing experiments on matrix completion with $\ell_1$-regularization:
\begin{equation}\label{eq:matrix-completion-l1}
\min_{\|\w\|_\star\leq \zeta}
\sum_{(i, j) \in \Omega}\left(\w_{ij}-X_{ij}\right)^{2}
+ \lambda \|w\|_1.
\end{equation}
We use the same dataset and parameter settings as in~\eqref{eq:matrix-completion} with the regularization parameter set to $\lambda = \texttt{0.1}$. 
\Cref{fig:matrix-completion-experiment-l1} presents the train and test RMSE obtained in this experiment. Note that the estimates remain feasible in this experiment since $g$ is not an indicator function.

\subsection{\texorpdfstring{$k$}{TEXT}-Means Clustering}\label{sec:clustering}

In this experiment, we test H-SAG-CGM/v2. The goal in $k$-means is to assign $n$ data points to $k$ clusters. We consider the following SDP relaxation of this problem~\citep{peng2007approximating}:
\begin{equation}\label{eq:kmeans}
\begin{array}{ll}
\displaystyle\min_{\w\in\mathcal X}
& \langle \w, \C\rangle \quad\text {subject to } \w\vec{1} = \vec{1}, \text{ and }  w\geq 0
\end{array}
\end{equation}
where $\mathcal X= \{\w\in\mathbb S^{n\times n}_+ \mid \Tr(\w)\leq\frac{1}{n}\}$, $\vec{1} = [1,1,\ldots,1]\in\mathbb R^n$, and $\w\geq 0$ denotes entry-wise non-negativity. 
The problem is strongly constrained with a total of $n^2+n$ constraints --- $n$ equality and $n^2$~inequality constraints.

This problem is also studied in the related works on homotopy CGM in \citep{yurtsever2018conditional, locatello_stochastic_2019, vladarean2020conditional}. We use the same test setup: For the input data $C$, we use \textsc{mnist} dataset\footnote{\url{http://yann.lecun.com/exdb/mnist/}} with the preprocessing considered in \citet{mixon2016clustering}. We set $\beta_0 = \texttt{7}$.

We compare the methods based on the number of epochs (an epoch corresponds to a full pass over the constraints) since different methods use different batch sizes in this experiment. The first two plots in \Cref{fig:sdp-experiments} present the outcomes of this experiment. We measure the objective residual as relative suboptimality $|f(\w_k) - \optimal f| / |\optimal f|$ and the infeasibility error as the Euclidean distance to the feasible set $\mathrm{dist(Aw_{k},\mathcal{K})}$.

\subsection{Uniform Sparsest Cut}\label{sec:sparsestcut}

In this experiment, we test H-SAG-CGM/v2 on the uniform sparsest cut SDP. This problem is particularly interesting because of the $\bigO(n^3)$ number of constraints.

Let $G = (V,E)$ be a graph with $n$ nodes $|V| = n$ and a set of edges $E$. The goal in uniform sparsest cut is to split vertices into two partitions $(S,\bar S)$ that minimize
\begin{equation}
    \frac{|E(S,\bar S)|}{|S||\bar S|}
\end{equation}
where $E(S,\bar S)\subseteq E$ is the set of edges between the nodes in $S$ and $\bar S$.

This canonical problem has applications across many fields including VLSI circuit layout design, the topological design of communication networks, image segmentation, and many others. In machine learning, it is a sub-problem of hierarchical clustering \citep{dasgupta2016cost,chatziafratis2018hierarchical}. %

\citet{arora2009expander} propose a $\bigO(\sqrt{\log n})$-approximation algorithm for this problem based on an SDP relaxation with $\mathcal O(n^3)$ triangle inequality constraints. We adapt their formulation to our SDP model \eqref{main:basic_sdp_def}:
\begin{equation}\label{eq:sparsestcut}
\begin{array}{ll}
\displaystyle\min_{\substack{w\in \mathbb{S}^{n\times n}_+\\ \Tr(w)\leq n}} & \langle L,w\rangle\\[1em]
\text{subj.\ to} & n\Tr(w) - \Tr(\mathds 1_{n\times n}w) = \displaystyle\frac{n^2}{2}\\[1em]
& w_{ij} + w_{jk} - w_{ik} - w_{jj} \leq 0\quad\forall i,j,k\in V
\end{array}
\end{equation}
where $L$ is the graph Laplacian of $G$.

We used three datasets from the Network Repository \citep{rossi2015network}:\footnote{\url{https://networkrepository.com}} \textsc{25mammalia-primate-associate-13}, \textsc{55n-insecta-ant-colony1-day37}, and \textsc{102n-insecta-ant-colony4-day10}. These three datasets differ in size by a factor of ten. See \Cref{tab:usc_graphs} in the Appendix for more details. We use $\beta_0 = \texttt{100}$ for all three network datasets. 

\Cref{fig:sdp-experiments} presents the results of this experiment.  As in the k-means experiment, the objective residual infeasibility error represent $|f(\w_k) - \optimal f| / |\optimal f|$ and $\mathrm{dist(Aw_{k},\mathcal{K})}$ respectively. H-SPIDER-FW is affected by the growing number of constraints because of its increasing batch size strategy. Other methods, with constant batch size, are less affected. H-SAG-CGM/v2 performs competitively against H-SPIDER-FW without requiring an increasing batch size.  %

\section{CONCLUSION}

We developed a fast randomized conditional gradient method for solving convex composite finite-sum problems. The proposed method is particularly suitable for solving SDPs with a large number of affine constraints. Theoretically, the proposed method has favorable scaling properties compared to the previous state-of-the-art. Empirically, it performs on par with more sophisticated variance reduction techniques. 

The proposed method takes advantage of a structural assumption on the separability of the objective by applying randomization. For the non-smooth term, the proposed method tackles the two subcases of deterministic and stochastic separately. If the non-smooth term is deterministic, the proposed method obtains an $\varepsilon$-suboptimal solution after $\mathcal{O}(\varepsilon^{-2}dm)$ arithmetic operations (where $d$ is the dimensionality of the decision variable and $m$ is the number of constraints comprising~$g$). This improves the previous complexity of $\mathcal O(\varepsilon^{-3}dm)$ found in \citet{locatello_stochastic_2019}. 

If we further assume that the non-smooth part is also separable, then we can employ a fully randomized scheme to find an $\varepsilon$-suboptimal solution after $\mathcal{O}(\varepsilon^{-2}d)$ arithmetic operations. This total cost complexity is independent of $m$ and thus represents a significant improvement compared over previous work \citep{vladarean2020conditional} which has a total cost of $\mathcal{O}(\varepsilon^{-2}dm)$.

\subsubsection*{Acknowledgments}

The authors thank Vincent Fortuin for his helpful feedback on an initial draft of this work and the anonymous reviewers for their detailed comments. This work started while F.L. was at ETH Z\"urich and is based on the research done outside of Amazon.

This  work  was  supported  by ETH core funding to G.R. (funding G.D.). 
V.C.\ has received funding from the European Research Council (ERC) under the European Union's Horizon 2020 research and innovation programme (grant agreement n$^\circ$ 725594 -- time-data). 
M.L.V.\ was supported by the Swiss National Science Foundation (SNSF) for the project ``Theory and Methods for Storage-Optimal Optimization'' grant number 200021\_178865. 
A.Y.\ received support from the Wallenberg AI, Autonomous Systems and Software Program (WASP) funded by the Knut and Alice Wallenberg Foundation.

\bibliography{references}
\bibliographystyle{plainnat}

\appendix

\onecolumn \makesupplementtitle
\setcounter{table}{0}
\setcounter{figure}{0}
\renewcommand{\thetable}{S\arabic{table}}  
\renewcommand{\thefigure}{S\arabic{figure}}

\allowdisplaybreaks %

\section{Background on Smoothing}

This section recalls some useful properties about the smoothing technique \citep{nesterov05}. We present these known properties in this section for completeness, since we use them in our analysis. 

Let $g:\R^m \to \R \cap \{+\infty\}$ be a proper, closed and convex function. The smooth approximation of $g$ is defined by
\begin{equation}\label{eqn:supp-smoothing}
g_{\beta} (z) = \max_{y \in \R^d} \left\{ \ip{z}{y} - g^\ast(y) - \frac{\beta}{2} \norm{y}^2 \right\}
\end{equation}
where $g^\ast$ denotes the Fenchel conjugate and $\beta > 0$ is the smoothing parameter. 
Then, $g_\beta$ is convex and $\tfrac{1}{\beta}$-smooth. 
Let $y^\ast_\beta(z)$ denote the solution of the maximization sub-problem in \eqref{eqn:supp-smoothing}, i.e.,
\begin{align}
y^\ast_\beta(z) & = \arg\max_{y \in \R^d}  \left\{ \ip{z}{y} - g^\ast(y) - \frac{\beta}{2} \norm{y}^2 \right\} \label{eqn:supp-ybeta-defn}\\
& = \arg\min_{y \in \R^d} \left\{\frac{1}{\beta} g^\ast(y) - \frac{1}{\beta} \ip{z}{y} + \frac{1}{2} \norm{y}^2 + \frac{1}{2} \norm{\frac{1}{\beta} z}^2 \right\} \\
& = \arg\min_{y \in \R^d} \left\{ \frac{1}{\beta} g^\ast(y) + \frac{1}{2} \norm{y - \frac{1}{\beta} z}^2 \right\} \\
& = \prox_{\beta^{-1} g^\ast} (\beta^{-1} z) \\
& = \frac{1}{\beta} \big( z - \prox_{\beta g} (z) \big)
\end{align}
where the last line is the Moreau decomposition. 
Then, the followings hold $\forall z_1, z_2 \in \R^m$ and $\forall \beta, \gamma > 0$
\begin{align}
g_\beta(z_1) & \geq g_\beta(z_2) + \ip{\nabla g_{\beta}(z_2)}{z_1 - z_2} + \frac{\beta}{2} \norm{y^\ast_\beta(z_2) - y^\ast_\beta(z_1)}^2 \label{eqn:smoothing-prop-1} \\
g(z_1) & \geq g_\beta(z_2) + \ip{\nabla g_{\beta}(z_2)}{z_1 - z_2} + \frac{\beta}{2} \norm{y^\ast_\beta(z_2)}^2 \label{eqn:smoothing-prop-2} \\
g_{\beta} (z_1) & \leq g_{\gamma}(z_1) + \frac{\gamma - \beta}{2} \norm{y^\ast_\beta(z_1)}^2 \label{eqn:smoothing-prop-3}
\end{align}
We refer to Lemma~10 in \citep{trandinh2018smooth} for the proofs. 

Suppose that $g$ is $L_g$-Lipschitz continuous. Then, for $\forall \beta > 0$ and $\forall z \in \R^m$, 
\begin{equation}\label{eqn:smoothing-sandwich}
g_{\beta} (z) \leq g (z) \leq g_{\beta} (z) + \frac{\beta}{2} L_g^2,
\end{equation}
The proof follows immediately from Equation~(2.7) in \citep{nesterov05} with a remark on the duality between bounded domain and Lipschitz continuity. %

\clearpage 

\section{Proof of \texorpdfstring{\Cref{lem:smooth-gap-rec}}{Lemma~\ref{lem:smooth-gap-rec}}}\label{app:general_bound}

We follow the steps laid out in Theorem 4.1 in \citep{vladarean2020conditional}, which in turn builds upon Theorem 9 in \citep{locatello_stochastic_2019}. 

We use the  quadratic upper bound ensured by the fact that $F_\betak$ is $L_{F_\betak}$-smooth:
\begin{align}
    F_\betak(w_{k+1})
    &\leq
    F_\betak(w_k) + \innerp{\grad F_\betak(w_k), w_{k+1} - w_k} + \frac{L_{F_\betak}}{2}\|w_{k+1} - w_k\|^2\\
    &\leq
    F_\betak(w_k) + \eta_k\innerp{\grad F_\betak(w_k), s_k - w_k} + \frac{\eta_k^2L_{F_\betak}D_\domain^2}{2}
\end{align}
where the second line follows from the boundedness of $\domain$. 

Next, we use the rule for change of $\beta$ in smoothing (see \eqref{eqn:smoothing-prop-3}), which gives 
\begin{align}
    F_\betak(w_{k+1})
    &\leq F_\betakminusone(w_k) + \frac{\betakminusone-\betak}{2} \|\optimal y_\betak(Aw_k)\|^2 + \eta_k\innerp{\grad F_\betak(w_k), s_k - w_k} + \frac{\eta_k^2L_{F_\betak}D_\domain^2}{2},\label {eq:qub_unexpanded_gap}
\end{align}
where $y_{\beta_k}^\star$ is defined as in \eqref{eqn:supp-ybeta-defn}. 

Then, we bound the term $\innerp{\grad F_\betak(w_k), s_k - w_k}$ as follows:
\begin{align}
\innerp{\grad F_\betak(w_k), & s_k - w_k} =
\innerp{\grad F_\betak(w_k) - v_k, s_k - w_k} + \innerp{v_k, s_k - w_k}\\
&= \innerp{\grad F_\betak(w_k) - v_k, s_k - \optimal{w}} + \innerp{\grad F_\betak(w_k) - v_k, \optimal{w} - w_k} + \innerp{v_k, s_k - w_k}\\
&\leq
\innerp{\grad F_\betak(w_k) - v_k, s_k - \optimal{w}} + \innerp{\grad F_\betak(w_k) - v_k, \optimal{w} - w_k} + \innerp{v_k, \optimal{w} - w_k}\\
&=
\innerp{\grad F_\betak(w_k) - v_k, s_k - \optimal{w}} + \innerp{\grad F_\betak(w_k), \optimal{w} - w_k}%
\end{align}
where the inequality follows by the definition of $s_k$. 

Now, we focus on the term $\innerp{\grad F_\betak(w_k), \optimal{w} - w_k}$ and bound it as follows:
\begin{align}
\innerp{\grad F_\betak(w_k), \optimal{w} - w_k}
&=  \innerp{X^T\grad f(Xw_k) + A^T \grad g_\betak(Aw_k), \optimal{w} - w_k}\\
&=  \innerp{\grad f(Xw_k), X(\optimal{w} - w_k)} + \innerp{\grad g_\betak(Aw_k), A(\optimal{w} - w_k)}\\
&\leq  f(X\optimal{w}) - f(Xw_k) + g(A\optimal{w}) - g_\betak(Aw_k) 
-\frac{\betak}{2}\|\optimal{y}_\betak(Aw_k)\|^2 \label{eq:apply_tran2018} \\
&= \optimal F - F_\betak(w_k)
-\frac{\betak}{2}\|\optimal{y}_\betak(Aw_k)\|^2,
\end{align}
where the inequality holds due to the convexity of $f$ and $g$ and the smoothing property in \eqref{eqn:smoothing-prop-2}. 

Combining all these bounds and subtracting $\optimal F$ from both sides, we get
\begin{align}
    F_\betak(w_{k+1}) - \optimal F &\leq
    (1-\eta_k) \left( F_\betakminusone(w_k) - \optimal F\right)
    + \eta_k \innerp{\grad F_\betak(w_k) - v_k, s_k - \optimal{w}} \notag \\
    & \qquad
    + \frac{1}{2}((1-\eta_k)(\betakminusone-\betak) - \eta_k\betak)\elltwo{\optimal y_\betak(Aw_k)}^2
   + \frac{\eta_k^2L_{F_\betak}D_\domain^2}{2}
\end{align}

 We cannot bound $\elltwo{\optimal y_\betak(Aw_k)}^2$ in general, so we choose $\eta_k$ and $\beta_k$ carefully to vanish this term. Let $\eta_k = \frac{2}{k+1}$ and $\beta_k = \frac{\beta_0}{\sqrt{k+1}}$ for an arbitrary $\beta_0>0$. Then,
\begin{align}
    (1-\eta_k)(\betakminusone-\betak) - \eta_k\betak 
   = \frac{\beta_0}{\sqrt{k}}\left( \frac{k-1}{k+1} - \frac{\sqrt{k}}{\sqrt{k+1}} \right)
     < 0,
     \qquad \text{for all $k\geq1$.}
\end{align}

Finally, taking expectation on both sides and applying the definition of $S_{\beta}(w) \defeq \Expect \left[F_\beta(w) - \optimal F\right]$ we arrive at our stated result:
\begin{align}
    S_\betak(w_{k+1})&\leq
    (1-\eta_k)S_\betakminusone(w_k)
    + \eta_k \Expect[\innerp{\grad F_\betak(w_k) - v_k, s_k - \optimal{w}}]
   + \frac{\eta_k^2L_{F_\betak}D_\domain^2}{2}.
\end{align}

\clearpage

\section{Proof of \texorpdfstring{\Cref{main:smoothed_gap_convergence}}{Theorem~\ref{main:smoothed_gap_convergence}}}\label{app:conv_rate_det_constraints}

Our aim is to get a rate on the smoothed gap $S_\betak(w_{k+1})$. 
We start from \Cref{lem:smooth-gap-rec}:
\begin{align}
    S_\betak(w_{k+1})&\leq
    (1-\eta_k)S_\betakminusone(w_k)
    + \eta_k \Expect[\innerp{\grad F_\betak(w_k) - v_k, s_k - \optimal{w}}]
   + \frac{\eta_k^2}{2}\left(\frac{\|X\|L_f}{n} + \frac{\|A\|}{\betak }\right)D_\domain^2. \label{eq:smoothed-gap-bound-1}
\end{align}
Multiply both sides by $k(k+1)$ and unroll the recurrence to get
\begin{align}
    k(k+1) S_\betak(w_{k+1})
    & \leq (k-1)kS_\betakminusone(w_k) + 2 k \Expect[\innerp{\grad F_\betak(w_k) - v_k, s_k - \optimal{w}}] + \frac{2 k}{k+1}\left(\frac{\|X\|L_f}{n} + \frac{\|A\|}{\betak }\right)D_\domain^2 \notag \\
    & \leq \underbrace{\sum_{i=1}^k 2 i \Expect[\innerp{\grad F_{\beta_i}(w_i) - v_i, s_i - \optimal{w}}]}_{\circled{A}} +  \underbrace{\sum_{i=1}^k \frac{2 i}{i+1}\left(\frac{\|X\|L_f}{n} + \frac{\|A\|}{\beta_i }\right)D_\domain^2}_{\circled{B}}. \label{eqn:supp-thm41-ABbound}
\end{align}

First, we get an upper-bound on the variance term {\smash{\circled{A}}} as follows: 
\begin{align}
\Expect[\innerp{\grad F_\betak(w_k) - v_k, s_k - \optimal{w}}] 
& = \Expect[\innerp{X^T(\grad f(Xw_k) - \alpha_k) + A^T(\grad g_\betak(Aw_k) - \gamma_k), s_k - \optimal{w}}] \\
& = \Expect[\innerp{\grad f(Xw_k) - \alpha_k, X(s_k - \optimal{w})}] \\
& \leq \Expect[\norm{\grad f(Xw_k) - \alpha_k}_1 \,\norm{X(s_k - \optimal{w})}_\infty] \\
& \leq \Expect[\norm{\grad f(Xw_k) - \alpha_k}_1] \, D_{\infty}(X) \label{eqn:supp-thm41-variance-1}
\end{align}

where, the first inequality is the H\"older's inequality, and the second one is based on the boundedness of $\domain$.

Then, by \Cref{main:alpha_err}, we have  
\begin{align}
\Expect[\|\grad f(Xw_k) - \alpha_k\|_1] 
\leq \left(1 - \tfrac{1}{n}\right)^{k} \|\grad f(Xw_1) - \alpha_0\|_1 + \frac{2 L_f D_1(X)}{n} \left(  \left(1 - \tfrac{1}{n}\right)^{k/2}  \log k + \frac{2(n-1)}{k} \right).\label{eqn:supp-thm41-variance-2}
\end{align}
Finally, we combine \eqref{eqn:supp-thm41-variance-1} and \eqref{eqn:supp-thm41-variance-2} to get
\begin{align}
\circled{A} 
& \leq 2 D_{\infty}(X) \left[ \|\grad f(Xw_1) - \alpha_0\|_1 \sum_{i=1}^k i \left(1 - \tfrac{1}{n}\right)^{i}  + \frac{2 L_f D_1(X)}{n} \sum_{i=1}^k\left( i \left(1 - \tfrac{1}{n}\right)^{i/2}  \log i + 2(n-1) \right) \right] \\
& \leq 2 D_{\infty}(X) \left[ \|\grad f(Xw_1) - \alpha_0\|_1 \, n^2  + \frac{2 L_f D_1(X)}{n} \left( 16 n^3 + 2(n-1)k \right) \right] \\
& \leq 2 D_{\infty}(X) \left[ \|\grad f(Xw_1) - \alpha_0\|_1 \, n^2  + 4 L_f D_1(X) \left( 8 n^2 + k \right) \right]
\end{align}
where we use \Cref{lem:bounds-for-sums} for the second line. 

Next, we focus on the term \smash{\circled{B}}, and we use once again \Cref{lem:bounds-for-sums} and obtain
\begin{align}
\circled{B} 
& = 2 D_\domain^2 \left( \frac{\|X\|L_f}{n}\sum_{i=1}^k \frac{ i}{i+1} +  \frac{\|A\|}{\beta_0} \sum_{i=1}^k \frac{ i}{\sqrt{i+1}} \right) \leq 2 D_\domain^2 \left( \frac{\|X\|L_f}{n} k +  \frac{\|A\|}{\beta_0} k \sqrt{k+1}  \right).
\end{align}

To finalize, we substitute the bounds on \smash{\circled{A}} and \smash{\circled{B}} into \eqref{eqn:supp-thm41-ABbound} and divide both sides by $k(k+1)$ to get the desired bound on $S_\betak (w_{k+1})$:
\begin{align*}
    S_\betak(w_{k+1})
    & \leq \frac{2 D_{\infty}(X)}{k(k+1)}  \left[ \|\grad f(Xw_1) - \alpha_0\|_1 \, n^2  + 4 L_f D_1(X) \left( 8 n^2 + k \right) \right] + \frac{2 D_\domain^2}{k(k+1)}  \left( \frac{\|X\|L_f}{n} k +  \frac{\|A\|}{\beta_0} k \sqrt{k+1} \right)  \\
    & \leq \frac{C_3}{k(k+1)} + \frac{C_2}{k+1} + \frac{C_1}{\sqrt{k+1}}, \quad \text{where} \quad 
    \begin{aligned}[t]
    C_3 & = 2 n^2 D_{\infty}(X) \big( \|\grad f(Xw_1) - \alpha_0\|_1  + 32 L_f D_1(X) \big) \\
    C_2 & = 8 L_f D_1(X) D_{\infty}(X) + 2 n^{-1} L_f \|X\| D_\domain^2  \\
    C_1 & = 2 D_\domain^2 \|A\| \beta_0^{-1}.
    \end{aligned}
\end{align*}

\subsection{Proof of \texorpdfstring{\Cref{main:final_convergence_Lipschitz}}{Corollary~\ref{main:final_convergence_Lipschitz}}}

Suppose $g$ is $L_g$-Lipschitz continuous. Then, from \eqref{eqn:smoothing-sandwich} we get
\begin{align}
    \Expect F(x_{k+1}) - \optimal F 
    & = \Expect [f(Xw_{k+1}) + g(Aw_{k+1})] - \optimal F \\
    & \leq \Expect [f(Xw_{k+1}) + g_{\beta_{k}}(Aw_{k+1})] - \optimal F + \frac{\beta_{k}L_g^2}{2} \\
    & = S_{\beta_{k}}(w_{k+1}) + \frac{\beta_0 L_g^2}{2\sqrt{k+1}}.
\end{align}

\subsection{Proof of \texorpdfstring{\Cref{main:final_convergence}}{Corollary~\ref{main:final_convergence}}}

Suppose $g(z) = \delta_{\constset}(z)$, the indicator function of a closed and convex set. 
We can write the Lagrangian as
\begin{equation}
    \mathcal{L}(w, r, y) \defeq f(Xw) + \langle Aw - r, y\rangle, \qquad w \in \domain, ~ r \in \constset.
\end{equation}
From the Lagrange saddle point theory,  we  have 
\begin{equation}
    f(X\optimal{w}) 
    \leq \mathcal{L}(w, r, \optimal{y}) \leq f(Xw) + \|Aw - r\| \|\optimal{y}\|, \qquad \forall w \in \domain ~\text{and}~ \forall r \in \constset.
\end{equation}
Letting $w = w_{k+1} \in \domain$ and $r = \proj_{\constset}(Aw_{k+1}) \in \constset$, taking expectation on both sides and rearranging, we get
\begin{equation}\label{eqn:supp-corr41-obj-lower-bound}
    \Expect\left[ f(Xw_{k+1}) - f(Xw^*)\right] \geq -\|\optimal{y}\| \, \Expect\left[\dist(Aw_{k+1}, \constset)\right]
\end{equation}
This is the desired lower-bound on objective residual.

Next, we derive an upper bound on objective residual. 
By definition of $g_\beta$ (see \eqref{eqn:supp-smoothing}) for $\delta_\constset$, 
\begin{equation}
g_{\beta} (Aw) 
= \frac{1}{2\beta} \dist(Aw, \mathcal{K})^2. 
\end{equation}
Note that $f(X\optimal{w}) = F(\optimal{w})$ since $g(A\optimal{w}) = 0$. Then, 
\begin{align}
\Expect[f(Xw_{k+1}) - f(X\optimal{w})] 
& = \Expect[F_\betak(w_{k+1}) - \optimal{F} - g_\betak(Aw_{k+1})] \\
& \leq S_\betak(w_{k+1}) - \frac{1}{2\betak}\Expect[\dist(Aw_{k+1}, \mathcal{K})^2] \label{eqn:supp-corr41-obj-upper-bound} \\
& \leq S_\betak(w_{k+1}).
\end{align}

Finally, we derive convergence rate of the infeasibility error. 
To this end, we combine \eqref{eqn:supp-corr41-obj-lower-bound} and \eqref{eqn:supp-corr41-obj-upper-bound}:
\begin{align}
-\|\optimal{y}\| \, \Expect\left[\dist(Aw_{k+1}, \constset)\right] 
\leq S_\betak(w_{k+1}) - \frac{1}{2\betak}\Expect[\dist(Aw_{k+1}, \mathcal{K})^2]
\end{align}
We rearrange and apply Jensen's inequality to $\Expect[\dist(Aw_{k+1},\constset)^2]$, and we get a second order inequality with respect to $\Expect[\dist(Aw_{k+1}, \mathcal{K})]$:
\begin{align}
\frac{1}{2\betak}\underbrace{\Expect[\dist(Aw_{k+1}, \mathcal{K})]^2}_{t^2} -\|\optimal{y}\| \, \underbrace{\Expect[\dist(Aw_{k+1}, \constset)]}_{t} - S_\betak(w_{k+1})
\leq 0.
\end{align}
By solving this inequality for $t$, we achieve the desired bound:
\begin{align}
\Expect[\dist(Aw_{k+1}, \constset)] 
\leq \betak\left( \|\optimal{y}\| + \sqrt{\|\optimal{y}\|^2 + \frac{2 S_\betak(w_{k+1})}{\betak}} \right) 
\leq 2\betak\|\optimal{y}\| + \sqrt{2 \betak  S_\betak(w_{k+1})} ,
\end{align}
where we used $\sqrt{a^2 + b^2} \leq a + b $ for $a, b \geq 0$ in the last inequality to simplify the terms.

\clearpage

\section{Proof of \texorpdfstring{\Cref{lem-gamma-error}}{Lemma~\ref{lem-gamma-error}}}\label{app:lem-gamma-error-proof}

The following Lemma will be needed in the subsequent characterization of the estimator variance.

\begin{lemma}
\label{lem:negiar--adapted}
Let $\rho\in (0,1)$, $C\in\mathbb R$ and $\{u_k\}_{k \in \N}$ be a sequence such that
\begin{equation}\label{eqn:supp-lemmaE1-cond}
    u_k \leq \rho(u_{k-1} + \frac{1}{\sqrt{k}} C).
\end{equation}
Then, it holds that
\begin{equation}\label{eqn:supp-lemmaE1}
u_k \leq \rho^{k} u_1 + \frac{2C\rho}{\sqrt{k}(1-\rho)} .    
\end{equation}
\end{lemma}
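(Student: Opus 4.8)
The plan is to unroll \eqref{eqn:supp-lemmaE1-cond} and reduce the statement to a purely numerical estimate. Iterating the recursion down to the base index yields
\begin{equation*}
u_k \;\le\; \rho^{k} u_1 \;+\; C\sum_{j=1}^{k}\frac{\rho^{k-j+1}}{\sqrt{j}} \;=\; \rho^{k} u_1 \;+\; C\rho\,\Sigma_k,\qquad \Sigma_k \;:=\; \sum_{j=1}^{k}\frac{\rho^{k-j}}{\sqrt{j}},
\end{equation*}
where $C\ge 0$ (the only case of interest). Hence it suffices to prove $\Sigma_k \le \tfrac{2}{\sqrt{k}\,(1-\rho)}$. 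The structural fact I would exploit is that $\Sigma_k$ obeys the \emph{exact} recursion $\Sigma_k = \rho\,\Sigma_{k-1} + \tfrac{1}{\sqrt{k}}$ with $\Sigma_0 = 0$.

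I would bound $\Sigma_k$ by splitting on the size of $k$ relative to $\tfrac{1}{1-\rho}$. \emph{Small case, $k(1-\rho)\le 1$:} since $\rho^{k-j}\le 1$ and $\sum_{j=1}^k \tfrac{1}{\sqrt j}\le 2\sqrt{k}$, we get $\Sigma_k \le 2\sqrt{k} \le \tfrac{2}{\sqrt{k}(1-\rho)}$, the last step being exactly $k(1-\rho)\le 1$. \emph{Large case, $k(1-\rho)>1$:} induct on $k$ (with the small case as base), using $\Sigma_k = \rho\Sigma_{k-1}+\tfrac1{\sqrt k}$ and the inductive bound on $\Sigma_{k-1}$; it then suffices to verify
\begin{equation*}
\frac{2\rho}{\sqrt{k-1}\,(1-\rho)} + \frac{1}{\sqrt{k}} \;\le\; \frac{2}{\sqrt{k}\,(1-\rho)},
\end{equation*}
which, after clearing denominators, is the elementary inequality $\sqrt{\tfrac{k}{k-1}} \le \tfrac{1+\rho}{2\rho}$. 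This is equivalent to $k \ge 1 + \tfrac{4\rho^2}{(1-\rho)(1+3\rho)}$, and a one-line check gives $\tfrac{1}{1-\rho} \ge 1 + \tfrac{4\rho^2}{(1-\rho)(1+3\rho)}$ (it reduces to $1\ge\rho$); hence the hypothesis $k(1-\rho)>1$ already guarantees it. Since $k=1$ always lies in the small case, the induction is well founded, and the two cases together give $\Sigma_k\le\tfrac{2}{\sqrt k(1-\rho)}$, and thus the lemma.

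The main obstacle is extracting the constant $2$ rather than something larger: no single uniform argument suffices. Pure induction on $\Sigma_k\le\tfrac{c}{\sqrt k(1-\rho)}$ breaks for small $k$ when $\rho$ is close to $1$, since there $\sqrt{k/(k-1)}$ is near $\sqrt{2}$ while $\tfrac{1+\rho}{2\rho}$ is near $1$; and a pure ``split the sum at $j\approx k/2$'' estimate leaves a residual of order $\sqrt k\,\rho^{k/2}$ that is only reabsorbed at the cost of a worse constant. Separating the regime where so few terms are present that no geometric damping has taken effect from the regime where the recursion telescopes cleanly is exactly what makes the constant come out as $2$. A minor bookkeeping point is the exponent on the initial term: unrolling the recursion $k$ times places a factor $\rho^{k}$ on it under the indexing used in \Cref{lem-gamma-error} (where the base quantity is a ``step $0$'' term); with the alternative convention it would be $\rho^{k-1}$.
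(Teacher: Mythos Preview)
Your proof is correct and takes a genuinely different route from the paper. After the same unrolling, the paper bounds $\sum_{i=1}^k \frac{\rho^{k-i+1}}{\sqrt i}$ in a single stroke via a Chebyshev-type (rearrangement) inequality: since $i\mapsto \rho^{k-i+1}$ is increasing while the weights $\tfrac{1}{\sqrt i}$ are decreasing, the weighted average is at most the uniform average, so
\[
\sum_{i=1}^k \frac{\rho^{k-i+1}}{\sqrt i}\;\le\;\frac{1}{k}\Bigl(\sum_{i=1}^k \frac{1}{\sqrt i}\Bigr)\Bigl(\sum_{i=1}^k \rho^i\Bigr)\;\le\;\frac{2\sqrt k}{k}\cdot\frac{\rho}{1-\rho}\;=\;\frac{2\rho}{\sqrt k\,(1-\rho)},
\]
delivering the constant $2$ with no case analysis. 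Your split-and-induct argument is more elementary in that it avoids the rearrangement step entirely and makes the threshold $k\sim\tfrac{1}{1-\rho}$ explicit, at the price of a longer verification. Both approaches tacitly need $C\ge 0$, and your closing remark on $\rho^{k}$ versus $\rho^{k-1}$ is on point: the paper's own unrolling actually produces $\rho^{k-1}u_1$, so the exponent in the statement is a harmless indexing slip in its only downstream use.
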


\begin{proof}
Unrolling the recurrence yields
\begin{align}\label{eqn:supp-lemmaE1-proof1}
   u_k 
   \leq \rho^{k-1} u_1 + C \sum_{i=2}^k \frac{\rho^{k-i+1}}{\sqrt{i}} 
\end{align}

Observe that $\rho^{k+1-i}$ is a monotonically increasing with $i$ because $\rho \in (0,1)$. Therefore, 
\begin{align} 
\frac{1}{\sum_{i=1}^k \frac{1}{\sqrt{i}}} \sum_{i=1}^k \frac{\rho^{k-i+1}}{\sqrt{i}} 
\leq \frac{1}{k} \sum_{i=1}^k \rho^{k-i+1} 
= \frac{1}{k} \sum_{i=1}^k \rho^{i} 
\end{align}
since the left side of the inequality is a weighted average of $\rho^{k-i+1}$ with decreasing weights and the right side is the simple average with uniform weights. The equality holds simply by change of indices. 
Now, we rearrange as
\begin{align} \label{eqn:supp-lemmaE1-proof2}
\sum_{i=1}^k \frac{\rho^{k-i+1}}{\sqrt{i}} 
\leq \frac{1}{k} \left(\sum_{i=1}^k \frac{1}{\sqrt{i}} \right) \left( \sum_{i=1}^k \rho^{i} \right)
\leq \frac{2\rho}{\sqrt{k}(1-\rho)}
\end{align}
We complete the proof by combining \eqref{eqn:supp-lemmaE1-proof1} and \eqref{eqn:supp-lemmaE1-proof2}.
\end{proof}

\subsection{Proof of \texorpdfstring{\Cref{lem-gamma-error}}{Lemma~\ref{lem-gamma-error}} for indicator functions}

First, we prove \Cref{lem-gamma-error} for the case in which $g$ is an indicator function. Observe that
\begin{align}
    \Expect_k[|\nabla g_{\beta_{k}}(Aw_k)_j - \gamma_{k, j}|] &= \frac{1}{m}0 + \frac{m-1}{m} |\nabla g_{\beta_{k}}(Aw_k)_j - \gamma_{k-1, j}|.
\end{align}
Summing over all coordinates gives
\begin{align} 
\Expect[\|\nabla g_{\beta_{k}}(Aw_k) - \gamma_k\|_1] 
&= \frac{m-1}{m} \Expect[\|\nabla g_{\beta_{k}}(Aw_k) - \gamma_{k-1}\|_1] \\
&=\frac{m-1}{m} \Expect[\|\nabla g_{\beta_{k}}(Aw_k) - \nabla g_{\beta_{k-1}}(Aw_{k-1}) + \nabla g_{\beta_{k-1}}(Aw_{k-1}) - \gamma_{k-1}\|_1]  \\
&\leq \frac{m-1}{m} \Big ( \Expect[\|\nabla g_{\beta_{k-1}}(Aw_{k-1}) - \gamma_{k-1}\|_1] + \Expect[\|\nabla g_{\beta_{k}}(Aw_k) - \nabla g_{\beta_{k-1}}(Aw_{k-1})\|_1] \Big).
\label{eqn:supp-lemma43-proof-1}
\end{align}
Now, we focus on the last term and bound it as follows:
\begin{align}
    \ellone{\nabla g_{\beta_{k}}(Aw_k) - \nabla g_{\beta_{k-1}}&(Aw_{k-1})}
    =
    \ellone{\nabla g_{\beta_{k}}(Aw_k) \pm \nabla g_\betak(Aw_{k-1}) - \nabla g_{\beta_{k-1}}(Aw_{k-1})}\\
    &\leq
    \ellone{\nabla g_{\beta_{k}}(Aw_k) - \nabla g_\betak(Aw_{k-1})} + \ellone{\nabla g_\betak(Aw_{k-1}) - \nabla g_{\beta_{k-1}}(Aw_{k-1})}\label{eq:expand_gradient_difference} \\
    &\leq
    \frac{1}{m\betak}\ellone{A(w_{k-1} - w_k)} + \frac{1}{m}\Big(\frac{1}{\betak}-\frac{1}{\betakminusone}\Big)\ellone{Aw_{k-1} - \proj_K(Aw_{k-1})}\\
    &\leq
    \frac{\eta_{k-1}}{m\betak}D_1(A) + \frac{1}{m}\Big(\frac{1}{\betak}-\frac{1}{\betakminusone}\Big)\ellone{Aw_{k-1} - A\optimal{w}}\\
    &\leq
    \frac{D_1(A)}{m}\Big(\frac{\eta_{k-1}}{\betak} + \frac{1}{\betak} - \frac{1}{\betakminusone}\Big)\label{eq:etak_betak_bound}
\end{align}
where the third inequality is due to the fact that $\constset = \constset_1\times \constset_2\times\cdots\times \constset_m$. Simplifying further:
$\frac{\eta_{k-1}}{\betak} + \frac{1}{\betak} - \frac{1}{\betakminusone}
=
\frac{2}{k}\frac{\sqrt{k+1}}{\beta_0} + \frac{\sqrt{k+1}}{\beta_0} - \frac{\sqrt{k}}{\beta_0}
<
\frac{2}{k}\frac{\sqrt{k}+1}{\beta_0} + \frac{\sqrt{k}\sqrt{k+1}}{\beta_0\sqrt{k}} - \frac{k}{\beta_0\sqrt{k}}
< 
\frac{2}{\beta_0\sqrt{k}} + \frac{2}{\beta_0k} + \frac{k+1}{\beta_0\sqrt{k}} - \frac{k}{\beta_0\sqrt{k}}
<
\frac{5}{\beta_0\sqrt{k}}$, gives
\begin{align}
\|\nabla g_{\beta_{k}}(Aw_k) - \nabla g_{\beta_{k-1}}(Aw_{k-1})\|_1
\leq
\frac{5D_1(A)}{m\beta_0\sqrt{k}}.
\end{align}

Substituting this back into \eqref{eqn:supp-lemma43-proof-1}, we get
\begin{align} 
\Expect[\|\nabla g_{\beta_{k}}(Aw_k) - \gamma_k\|_1] 
&\leq \frac{m-1}{m} \Big ( \Expect[\|\nabla g_{\beta_{k-1}}(Aw_{k-1}) - \gamma_{k-1}\|_1] + \frac{5D_2(A) \sqrt{m}}{\beta_0 \sqrt{k}}  \Big).
\end{align}
This is in the form of \eqref{eqn:supp-lemmaE1-cond}. We conclude the proof by applying \Cref{lem:negiar--adapted}:
\begin{equation}
\Expect[\|\nabla g_{\beta_{k}}(Aw_k) - \gamma_k\|_1] \leq \left( \frac{m-1}{m} \right)^{k} \Expect[\|\nabla g_{\beta_{0}}(Aw_0) - \gamma_0\|_1] + \frac{10 D_2(A) \sqrt{m} (m-1) }{\beta_0\sqrt{k}} .    
\end{equation}

\subsection{Proof of \texorpdfstring{\Cref{lem-gamma-error}}{Lemma~\ref{lem-gamma-error}} for Lipschitz continuous functions}

Suppose $g$ is Lipschitz continuous with parameter $L_g$. Then, from \eqref{eqn:smoothing-sandwich}, we get
\begin{equation}
\underbrace{f(Xw_{k+1}) + g (Aw_{k+1})}_{F(w_{k+1})}
\leq \underbrace{f(Xw_{k+1}) + g_{\betak} (Aw_{k+1})}_{F_\betak(w_{k+1})} + \frac{\betak}{2}  L_g^2
= F_\betak(w_{k+1}) + \frac{\beta_0 L_g^2}{2\sqrt{k+1}}.
\end{equation}
We achieve the desired bound by subtracting $\optimal{F}$ and taking expectation on both sides:
\begin{equation}
\Expect[F(w_{k+1}) - \optimal{F}]
\leq S_\betak(w_{k+1}) + \frac{\beta_0 L_g^2}{2\sqrt{k+1}}.
\end{equation}

To bound $S_\betak$, we can follow the proof of \Cref{lem-gamma-error} up to \eqref{eq:expand_gradient_difference}, which we repeat here for convenience:
\begin{equation*}
\ellone{\nabla g_{\beta_{k}}(Aw_k) - \nabla g_\betak(Aw_{k-1})} + \ellone{\nabla g_\betak(Aw_{k-1}) - \nabla g_{\beta_{k-1}}(Aw_{k-1})}
\end{equation*}
Recall that $\grad g_\beta(z) = \beta^{-1}(z - \prox_{\beta g}(z))$. The first term can be bounded using the $1/\beta$-smoothness of $g_\beta$. For the second term, recall the well-established fact that $\prox_g(z) = \lambda \prox_{g/\lambda}(x/\lambda)$ for any $\lambda > 0$. Thus, 
\begin{align}
    \grad g_\betak(Aw_{k-1}) &= \betak^{-1} (Aw_{k-1} - \prox_{\betak g}(Aw_{k-1}))\\
    &= \betak^{-1}(Aw_{k-1} - \frac{\betak}{\betakminusone}\prox_{\betakminusone g}(\frac{\betakminusone}{\betak}Aw_{k-1})) \\
    &= \grad g_\betakminusone(\frac{\betakminusone}{\betak}Aw_{k-1})
\end{align}
Thus,
\begin{align}
&\ellone{\nabla g_{\beta_{k}}(Aw_k) - \nabla g_\betak(Aw_{k-1})} + \ellone{\nabla g_\betak(Aw_{k-1}) - \nabla g_{\beta_{k-1}}(Aw_{k-1})}\\
&\leq
\frac{1}{m\betak}\ellone{A(w_k - w_{k-1})}
+ \frac{1}{m\betakminusone}(\frac{\betakminusone}{\betak}-1)\ellone{Aw_{k-1}}\\
&\leq
\frac{\eta_{k-1}}{m\betak}D_1(A)
+ \frac{1}{m}(\frac{1}{\betak} - \frac{1}{\betakminusone})\ellone{Aw_{k-1}}\\
&\leq
\frac{D_1(A)}{m}\left(\frac{\eta_{k-1}}{\beta_k} + \frac{1}{\betak} - \frac{1}{\betakminusone}\right)
\end{align}
Note that this is identical to 
\eqref{eq:etak_betak_bound} in 
\Cref{lem:negiar--adapted}. Thus, the rest of \Cref{lem:negiar--adapted} can be applied to arrive at the same bound.

\if 0
\gideon{
Yes, I agree. This should be tighter.

Repeating Equation 64:
\begin{align}
\|\nabla g_{\beta_{k}}(Aw_k) - \nabla g_{\beta_{k-1}}(Aw_{k-1})\|_1
\end{align}

Let's focus on the $j$-th coordinate of this norm. (Note that this is distinct from $(\nabla g_\betak)_j$ since it omits the $\frac{1}{n}$-factor).
\begin{align}
| g'_\betak(a_j^T w_k) - g'_\betakminusone(a_j^T w_{k-1}) | 
&= | g'_\betak(a_j^T w_k) \pm g'_\betak(a_j^T w_{k-1}) - g'_\betakminusone(a_j^T w_{k-1}) |\\
&\leq 
| g'_\betak(a_j^T w_k) - g'_\betak(a_j^T w_{k-1}) | + |g'_\betak(a_j^T w_{k-1}) - g'_\betakminusone(a_j^T w_{k-1}) | \\
&\leq
\frac{1}{\betak} |a_j^T (w_k - w_{k-1})| 
+ (\frac{1}{\betak} - \frac{1}{\betakminusone}) | a_j^Tw_{k-1} - \proj_{\constset_j}(a_j^Tw_{k-1}) |\\
&\leq
\frac{1}{\betak} |a_j^T (w_k - w_{k-1})| 
+ (\frac{1}{\betak} - \frac{1}{\betakminusone}) | a_j^T(w_{k-1} - \optimal{w}) |
\end{align}

Summing over coordinates $j$ return us to the $\ell_1$-norm.
\begin{align}
m\|\nabla g_{\beta_{k}}(Aw_k) - \nabla g_{\beta_{k-1}}(Aw_{k-1})\|_1
&\leq
\frac{1}{\betak}\| A(w_k - w_{k-1}) \|_1 + (\frac{1}{\betak} - \frac{1}{\betakminusone}) \| A(w_{k-1} - \optimal{w})\|_1\\
&\leq
D_1(A) \Big ( \frac{\eta_{k-1}}{\beta_k} + \frac{1}{\betak} - \frac{1}{\betakminusone} \Big )
\end{align}
Note that
$\frac{\eta_{k-1}}{\betak} + \frac{1}{\betak} - \frac{1}{\betakminusone}
=
\frac{2}{k}\frac{\sqrt{k+1}}{\beta_0} + \frac{\sqrt{k+1}}{\beta_0} - \frac{\sqrt{k}}{\beta_0}
<
\frac{2}{k}\frac{\sqrt{k}+1}{\beta_0} + \frac{\sqrt{k}\sqrt{k+1}}{\beta_0\sqrt{k}} - \frac{k}{\beta_0\sqrt{k}}
< 
\frac{2}{\beta_0\sqrt{k}} + \frac{2}{\beta_0k} + \frac{k+1}{\beta_0\sqrt{k}} - \frac{k}{\beta_0\sqrt{k}}
<
\frac{5}{\beta_0\sqrt{k}}$.
Putting this in and dividing by $m$ gives
\begin{align}
\|\nabla g_{\beta_{k}}(Aw_k) - \nabla g_{\beta_{k-1}}(Aw_{k-1})\|_1
\leq
\frac{5D_1(A)}{m\beta_0\sqrt{k}}
\end{align}

Now substitute this back in to get the relation
\begin{align}
\Expect[\|\nabla g_{\beta_{k}}(Aw_k) - \gamma_k\|_1] 
\leq
\frac{m-1}{m} \Big ( \Expect[\|\nabla g_{\beta_{k-1}}(Aw_{k-1}) - \gamma_{k-1}\|_1] + 
\frac{5 D_1(A)}{m\beta_0 \sqrt{k}} \Big).
\end{align}

This is the form of \Cref{lem:negiar--adapted} from which we get the following bound:
\begin{align}
\Expect[\|\nabla g_{\beta_{k}}(Aw_k) - \gamma_k\|_1] 
\leq
\left( \frac{m-1}{m}\right)^k \|\nabla g_{\beta_{0}}(Aw_0) - \gamma_0\|_1 + \frac{10 D_1(A)}{\beta_0\sqrt{k}}
\end{align}
}
\fi

\clearpage

\section{Proof of \texorpdfstring{\Cref{main:smoothed_gap_convergence}}{Theorem~\ref{main:smoothed_gap_convergence}} for H-SAG-CGM/v2}

The proof is same until \eqref{eqn:supp-thm41-ABbound}. Then, get an upper-bound on the variance term {\smash{\circled{A}}} as follows: 
\begin{align}
\Expect[\innerp{\grad F_\betak(w_k) - v_k, & s_k - \optimal{w}}] 
= \Expect[\innerp{X^T(\grad f(Xw_k) - \alpha_k) + A^T(\grad g_\betak(Aw_k) - \gamma_k), s_k - \optimal{w}}] \\
& = \Expect[\innerp{\grad f(Xw_k) - \alpha_k, X(s_k - \optimal{w})} +  \partB{\innerp{\grad g_\betak(Aw_k) - \gamma_k, A(s_k - \optimal{w})}}] \\
& \leq \Expect[\norm{\grad f(Xw_k) - \alpha_k}_1 \,\norm{X(s_k - \optimal{w})}_\infty + \partB{\norm{\grad g_\betak(Aw_k) - \gamma_k}_1 \,\norm{A(s_k - \optimal{w})}_\infty}] \\
& \leq \Expect[\norm{\grad f(Xw_k) - \alpha_k}_1] \, D_{\infty}(X) + \partB{\Expect[\norm{\grad g_\betak(Aw_k) - \gamma_k}_1] \, D_{\infty}(A)}
\label{eqn:supp-thm42-variance-1}
\end{align}
where, the first inequality is the H\"older's inequality, and the second one is based on the boundedness of $\domain$.

Then, by \Cref{main:alpha_err}, we have  
\begin{align}
\Expect[\|\grad f(Xw_k) - \alpha_k\|_1] 
\leq \left(1 - \tfrac{1}{n}\right)^{k} \|\grad f(Xw_1) - \alpha_0\|_1 + \frac{2 L_f D_1(X)}{n} \left(  \left(1 - \tfrac{1}{n}\right)^{k/2}  \log k + \frac{2(n-1)}{k} \right)
\label{eqn:supp-thm42-variance-f}
\end{align}
And by \Cref{lem-gamma-error}, we have
\begin{align}
\partB{\Expect[\|\nabla g_{\beta_{k}}(Aw_k) - \gamma_k\|_1] \leq \left( 1-\tfrac{1}{m} \right)^{k} \Expect[\|\nabla g_{\beta_{0}}(Aw_1) - \gamma_0\|_1] + \frac{10 D_2(A) \sqrt{m} (m-1) }{\beta_0\sqrt{k}}.}
\label{eqn:supp-thm42-variance-g}
\end{align}

Finally, we substitute \eqref{eqn:supp-thm42-variance-f} and \eqref{eqn:supp-thm42-variance-g} back into \eqref{eqn:supp-thm42-variance-1} to get
\begin{align}
\circled{A} 
& \leq 2 D_{\infty}(X) \left[ \|\grad f(Xw_1) - \alpha_0\|_1 \sum_{i=1}^k i \left(1 - \tfrac{1}{n}\right)^{i}  + \frac{2 L_f D_1(X)}{n} \sum_{i=1}^k\left( i \left(1 - \tfrac{1}{n}\right)^{i/2}  \log i + 2(n-1) \right) \right] \notag \\
& \qquad + \partB{2 D_\infty(A) \left[ \norm{\grad g_{\beta_0}(Aw_1) - \gamma_0}_1 \sum_{i=1}^k i \left(1 - \tfrac{1}{m}\right)^{i} + \frac{10 D_2(A) \sqrt{m} (m-1) }{\beta_0} \sum_{i=1}^k \sqrt{i} \right]} \\[0.75em]
& \leq 2 D_{\infty}(X) \left[ \|\grad f(Xw_1) - \alpha_0\|_1 \, n^2  + \frac{2 L_f D_1(X)}{n} \left( 16 n^3 + 2(n-1)k \right) \right] \notag \\
& \qquad + \partB{2 D_\infty(A) \left[ \norm{\grad g_{\beta_0}(Aw_1) - \gamma_0}_1 \, m^2 + \frac{10 D_2(A) \sqrt{m} (m-1) }{\beta_0} k^{3/2} \right]} \\[1em]
& \leq 2 D_{\infty}(X) \left[ \|\grad f(Xw_1) - \alpha_0\|_1 \, n^2  + 4 L_f D_1(X) \left( 8 n^2 + k \right) \right] \notag \\[0.5em]
& \qquad + \partB{2 D_\infty(A) \left[ \norm{\grad g_{\beta_0}(Aw_1) - \gamma_0}_1 \, m^2 + \frac{10 D_2(A) \, m^{3/2}}{\beta_0} k^{3/2} \right]} 
\end{align}
where we use \Cref{lem:bounds-for-sums} for the second inequality. 

Combining this with the bound on the smoothness term \smash{\circled{B}} from \eqref{eqn:supp-thm41-ABbound} gives the desired result:
\begin{align*}
    S_\betak(w_{k+1})
    & \leq \frac{2 D_{\infty}(X)}{k(k+1)}  \Bigg\{ \|\grad f(Xw_1) - \alpha_0\|_1 \, n^2  + 4 L_f D_1(X) \left( 8 n^2 + k \right) \\
    &\quad + \partB{2 D_\infty(A) \left[ \norm{\grad g_{\beta_0}(Aw_1) - \gamma_0}_1 \, m^2 + \frac{10 D_2(A) \, m^{3/2}}{\beta_0} k^{3/2} \right]}
    \Bigg\} \\
    &+ \frac{2 D_\domain^2}{k(k+1)}  \left( \frac{\|X\|L_f}{n} k +  \frac{\|A\|}{\beta_0} k \sqrt{k+1} \right)  \\
    & \leq \frac{C_3}{k(k+1)} + \frac{C_2}{k+1} + \frac{C_1}{\sqrt{k+1}}, \quad \text{where} \quad 
    \begin{aligned}[t]
    C_3 & = 2 n^2 D_{\infty}(X) \big( \|\grad f(Xw_1) - \alpha_0\|_1  + 32 L_f D_1(X) \big) \\
    &\qquad+ 2 m^2 D_\infty(A)\|\nabla g_{\beta_0}(Aw_1) - \gamma_0\|_1 \\
    C_2 & = 8 L_f D_1(X) D_{\infty}(X) + 2 n^{-1} L_f \|X\| D_\domain^2  \\
    C_1 & = \beta_0^{-1}(2 D_\domain^2 \|A\| + 10 D_1(A)).
    \end{aligned}
\end{align*}

\clearpage

\if 0
\section{Theoretical Analysis for H-SAG-CGM/v2}\label{app:conv_rate_stoch_constraints}

\gideon{This section is now outdated and should eventually be removed.}

We now translate this into a final convergence rate for H-SAG-CGM/v2.

\FinalConvergence*

\begin{proof}

We aim to get a rate on the smoothed gap $S_\betak(w_{k+1})$, starting from the result of Lemma~\ref{lem:smooth-gap-rec}, whose statement we reproduce below for reference:
\begin{align}
    S_\betak(w_{k+1})&\leq
    (1-\eta_k)S_\betakminusone(w_k)
    + \eta_k \Expect\|\grad F_\betak(w_k) - v_k\| D_\domain
   + \frac{\eta_k^2}{2}\left(\frac{\|X\|L_f}{n} + \frac{\|A\|}{\betak m}\right)D_\domain^2. \label{eq:smoothed-gap-bound}
\end{align}

First, we compute an upper-bound on $\Expect\|\grad F_\betak(w_k) - v_k\|$:

\begin{align}
    \Expect \left[ \|\grad F_\betak(w_k) - v_k\|\right]
    &= \Expect \left[\|X^T(\grad f(Xw_k) - \alpha_k) + A^T(\grad g_\betak(Aw_k) - \gamma_k)\| \right] \\
    &\leq \Expect \left[ \|X^T(\grad f(Xw_k) - \alpha_k)\|\right] + \Expect \left[\|A^T(\grad g_\betak(Aw_k) - \gamma_k)\|\right]\\
    &\leq \|X\|\Expect \left[\|\grad f(Xw_k) - \alpha_k\|\right] + \|A\|\Expect \left[\|\grad g_\betak(Aw_k) - \gamma_k\|\right] \label{eq:preliminary-var-bound-rec}
\end{align}

We address the first term in~\eqref{eq:preliminary-var-bound-rec} using \Cref{main:alpha_err}:
\begin{align}
\mathbb E \|\grad f(Xw_k) - \alpha_k\| &\leq \mathbb E \|\grad f(Xw_k) - \alpha_k\|_1 \\
    &\leq
    \left( 1 - \frac{1}{n}\right)^{k} \|\grad f(Xw_1) - \alpha_0\|_1\\
    &\quad + \frac{2 L_f D_1(X)}{n}\left( \left( 1 - \frac{1}{n}\right)^{k/2}\log k + \frac{2(n-1)}{k} \right)
\end{align}

For the second term, we use \Cref{lem-gamma-error} to get:
\begin{align}
    \Expect \left[\|\grad g_\betak(Aw_k) - \gamma_k\|\right] &\leq \Expect \left[\|\grad g_\betak(Aw_k) - \gamma_k\|_1 \right] \\
    &\leq \left( 1 - \frac{1}{m}\right)^k \|\nabla g_{\beta_{1}}(Aw_1) - \gamma_0\|_1 \\
    &\quad+ \frac{9\sqrt{2(m-1)}\|A\|D_{\domain} }{\beta_0 } \;\left(1 - \frac{1}{m}\right)^{(k-1)/2} \sqrt{k} \\
    &\quad+  \frac{9\|A\|D_{\domain} (m- 1)\sqrt{2m}}{\beta_0} \; \frac{1}{\sqrt{k}},
\end{align}

Putting everything back in~\eqref{eq:preliminary-var-bound-rec}, we get:
\begin{align}
    \Expect \left[ \|\grad F_\betak(w_k) - v_k\|\right]& \\
     &\hspace{-25mm}\leq\|X\| \left[ \left( 1 - \frac{1}{n}\right)^{k} \|\grad f(Xw_1) - \alpha_0\|_1
    + \frac{2 L_f D_1(X)}{n}\left( \left( 1 - \frac{1}{n}\right)^{k/2}\log k + \frac{2(n-1)}{k} \right) \right]  \\
    &\hspace{-25mm}\quad+ \|A\| \Bigg[ \left( 1 - \frac{1}{m}\right)^k \|\grad g_{\beta_1}(Aw_1) - \gamma_0\|_1 + \frac{9\sqrt{2(m-1)}\|A\|D_{\domain} }{\beta_0 } \;\left(1 - \frac{1}{m}\right)^{(k-1)/2} \sqrt{k} \\
    &\quad+ \frac{9\|A\|D_{\domain} (m- 1)\sqrt{2m}}{\beta_0} \frac{1}{\sqrt{k}} \Bigg]
\end{align}

Putting everything back into~\eqref{eq:smoothed-gap-bound} and rearranging, we get:
\begin{align}
S_\betak(w_{k+1})& \\
    &\hspace{-10mm}\leq (1-\eta_k)S_\betakminusone(w_k) \\
    &\hspace{-3mm}+ \eta_k D_\domain \Bigg\{ \|X\| \Bigg[ \left( 1 - \frac{1}{n}\right)^{k} \|\grad f(Xw_1) - \alpha_0\|_1\\
    &\hspace{-3mm}\quad+ \frac{2 L_f D_1(X)}{n}\left( \left( 1 - \frac{1}{n}\right)^{k/2}\log k + \frac{2(n-1)}{k} \right) \Bigg]  \\
    &\hspace{-3mm} + \|A\| \Bigg[ \left( 1 - \frac{1}{m}\right)^k \|\grad g_{\beta_1}(Aw_1) - \gamma_0\|_1 + \frac{9\sqrt{2(m-1)}\|A\|D_{\domain} }{\beta_0 } \;\left(1 - \frac{1}{m}\right)^{(k-1)/2} \sqrt{k}  \\
   &\hspace{-3mm} + \frac{9\|A\|D_{\domain} (m- 1)\sqrt{2m}}{\beta_0} \frac{1}{\sqrt{k}} \Bigg]\Bigg\} + \frac{\eta_k^2}{2}\left(\frac{\|X\|L_f}{n} + \frac{\|A\|}{\betak m}\right)D_\domain^2 \\[3mm]
    &\hspace{-10mm}= (1-\eta_k)S_\betakminusone(w_k) + \eta_k \left( 1 - \frac{1}{n}\right)^{k} \|X\|D_\domain \|\grad f(Xw_1) - \alpha_0\|_1 \\
   &\hspace{-3mm}+ \eta_k\log k\left( 1 - \frac{1}{n}\right)^{k/2} \frac{2 L_f \|X\| D_\domain D_1(X)}{n}  + \frac{\eta_k}{k} \frac{4(n-1) L_f \|X\| D_\domain D_1(X)}{n}    \\
    &\hspace{-3mm} +  \eta_k \left( 1 - \frac{1}{m}\right)^k \|A\|D_\domain \|\grad g_{\beta_1}(Aw_1) - \gamma_0\|_1 \\
    &\hspace{-3mm} + \eta_k\sqrt{k}\left(1 - \frac{1}{m}\right)^{(k-1)/2} \;  \frac{9\sqrt{2(m-1)}\|A\|^2D_{\domain}^2}{\beta_0 }   \\
    &\hspace{-3mm} + \frac{\eta_k}{\sqrt{k}} \; \frac{9\|A\|^2D_{\domain}^2 (m- 1)\sqrt{2m}}{\beta_0}  + \eta_k^2\;\frac{\|X\|D_\domain^2L_f}{2n} + \frac{\eta_k^2}{\betak}\;\frac{\|A\|D_\domain^2}{ 2m} \\[3mm]
     &\hspace{-10mm}= (1-\eta_k)S_\betakminusone(w_k) + \eta_k \left( 1 - \frac{1}{n}\right)^{k} \pmb{C_1} + \eta_k\log k\left( 1 - \frac{1}{n}\right)^{k/2} \pmb{C_2} + \frac{\eta_k}{k} \pmb{C_3}    +  \eta_k \left( 1 - \frac{1}{m}\right)^k \pmb{C_4} \\
   &\hspace{-10mm}\quad   +\eta_k\sqrt{k}\left(1 - \frac{1}{m}\right)^{(k-1)/2} \;  \pmb{C_5} + \frac{\eta_k}{\sqrt{k}} \;\pmb{C_6} + \eta_k^2\;\pmb{C_7} + \frac{\eta_k^2}{\betak}\;\pmb{C_8} \label{eq:sm-gap-recurrence-before-transform}
\end{align}

where, for simplicity, we defined constants $\pmb{C_1} := \|X\|D_\domain \|\grad f(Xw_1) - \alpha_0\|_1$, $\pmb{C_2}= \frac{2 L_f \|X\| D_\domain D_1(X)}{n}$, $\pmb{C_3} = \frac{4(n-1) L_f \|X\| D_\domain D_1(X)}{n}$, $\pmb{C_4} := \|A\|D_\domain \|\grad g_{\beta_1}(Aw_1) - \gamma_0\|_1$, $\pmb{C_5} := \frac{9\sqrt{2(m-1)}\|A\|^2D_{\domain}^2}{\beta_0 } $, $\pmb{C_6} :=  \frac{9\|A\|^2D_{\domain}^2 (m- 1)\sqrt{2m}}{\beta_0} $, $\pmb{C_7} := \frac{\|X\|D_\domain^2L_f}{2n}$, $\pmb{C_8} := \frac{\|A\|D_\domain^2}{ 2m}$.

We further transform~\eqref{eq:sm-gap-recurrence-before-transform} by using the definition of $\eta_k = \frac{2}{k+1}$ and $\beta_k = \frac{\beta_0}{\sqrt{k+1}}$and multiplying both sides by $k(k+1)$:
\begin{align}
k(k+1) S_\betak(w_{k+1})& \\
&\hspace{-15mm}\leq (k-1)k\;S_\betakminusone(w_k) + 2k \left( 1 - \frac{1}{n}\right)^{k} \pmb{C_1} + 2k\log k\left( 1 - \frac{1}{n}\right)^{k/2} \pmb{C_2} + 2 \pmb{C_3}  \\
&\hspace{-15mm}\quad + 2k \left( 1 - \frac{1}{m}\right)^k \pmb{C_4} \\
   &\hspace{-15mm}\quad + 2k\sqrt{k}\left(1 - \frac{1}{m}\right)^{(k-1)/2} \;  \pmb{C_5} + 2\sqrt{k} \;\pmb{C_6} + \frac{2k}{k+1}\pmb{C_7} + \frac{2k}{\beta_0\sqrt{k+1}}\pmb{C_8} \\[3mm]
   &\hspace{-15mm}\leq 2\pmb{C_1} \sum_{j=1}^k j \left( 1 - \frac{1}{n}\right)^{j} + 2\pmb{C_2}\sum_{j=1}^k j\log j\left( 1 - \frac{1}{n}\right)^{j/2} + 2k\pmb{C_3} + 2\pmb{C_4} \sum_{j=1}^k j \left( 1 - \frac{1}{m}\right)^j  \\
   &\hspace{-15mm}\quad + 2\pmb{C_5} \sum_{j=1}^k j\sqrt{j}\left(1 - \frac{1}{m}\right)^{(j-1)/2}  + 2\pmb{C_6} \sum_{j=1}^k \sqrt{j} + 2\pmb{C_7}\sum_{j=1}^k \frac{j}{j+1} + \frac{2\pmb{C_8}}{\beta_0} \sum_{j=1}^k \frac{j}{\sqrt{j+1}} \\[3mm]
   &\hspace{-15mm}\leq 2\pmb{C_1}n^2  + 2\pmb{C_2}16n^3 + 2k\pmb{C_3} + 2\pmb{C_4} m^2 + 2\pmb{C_5}16m^3 + 2\pmb{C_6}k^{3/2} \\
   &\hspace{-15mm}\quad+ 2\pmb{C_7}k + \frac{2\pmb{C_8}}{\beta_0}(k+2)^{3/2} + \frac{2\sqrt{2}\pmb{C_8}}{\beta_0} 
\end{align}

Dividing by $k(k+1)$ we recover the smoothed-gap bound:
\begin{align}
 S_\betak(w_{k+1}) &\leq \frac{1}{k(k+1)} \left( 2\pmb{C_1}n^2  + 2\pmb{C_2}16n^3  + 2\pmb{C_4} m^2 + 2\pmb{C_5}16m^3 + \frac{2\sqrt{2}\pmb{C_8}}{\beta_0} \right) \\
&\hspace{5mm}+\frac{1}{k+1}\left( 2\pmb{C_3} + 2\pmb{C_7} \right)  + \frac{1}{k(k+1)}\left(2\pmb{C_6}k^{3/2} + \frac{2\pmb{C_8}}{\beta_0}(k-1) \sqrt{(k+2)} \right) \\
&\leq \frac{1}{k^2} \left( 2\pmb{C_1}n^2  + 2\pmb{C_2}16n^3  + 2\pmb{C_4} m^2 + 2\pmb{C_5}16m^3 + \frac{2\sqrt{2}\pmb{C_8}}{\beta_0} \right) \\
&\hspace{5mm}+\frac{1}{k}\left( 2\pmb{C_3} + 2\pmb{C_7} \right)  + \frac{1}{\sqrt{k}}\left(2\pmb{C_6} + \frac{2\pmb{C_8}}{\beta_0} \right) 
\end{align}

Finally, replacing the constants back and rearranging we get:
\begin{align}
 S_\betak(w_{k+1}) &\leq \frac{1}{k^2} \; \Bigg( 2n^2D_\domain \|X\|\|\grad f(Xw_1) - \alpha_0\|_1  + 64n^2 L_f  D_\domain D_1(X) \|X\| \\
 &\hspace{5mm}\quad + 2 m^2D_\domain\|A\| \|\grad g_{\beta_1}(Aw_1) - \gamma_0\|_1 \\
 &\hspace{5mm}\quad + \frac{408m^3\sqrt{m-1}\;D_{\domain}^2\|A\|^2}{\beta_0 }  + \frac{\sqrt{2}D_\domain^2\|A\|}{ m\beta_0} \Bigg) \\
&\hspace{5mm}+\frac{1}{k}\;\|X\|L_fD_\domain^2\left( \frac{8  D_1(X)}{D_\domain} +  \frac{1}{n} \right)  + \frac{1}{\sqrt{k}} \; \frac{2\|A\|^2D_{\domain}^2}{\beta_0}\left(13 (m- 1)\sqrt{m}  + \frac{1}{ 2m\|A\|} \right)
\end{align}

\end{proof}

\fi

\section{Supporting Lemmas}

\begin{lemma}
\label{lem:bounds-for-sums}
Let $\rho_n = 1 - \frac{1}{n}$ and $\rho_m = 1 - \frac{1}{m}$, $m, n \geq 1$. We present the following bounds:

\begin{enumerate}[label=\alph*)]
    \item $\sum\limits_{i=1}^k i\rho_n^{i} < n^2 \quad$ and $\quad \sum\limits_{i=1}^k i\rho_m^{i} < m^2$
    \item $\sum\limits_{i=1}^k i\rho_n^{i/2}\log i < 16n^3$
\end{enumerate}

\end{lemma}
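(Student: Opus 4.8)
The plan is to reduce both inequalities to closed-form evaluations of the standard power series $\sum_{i\ge 1} i x^i = x/(1-x)^2$ and $\sum_{i\ge 1} i^2 x^i = x(1+x)/(1-x)^3$ (valid for $|x|<1$), after replacing each finite sum by the corresponding infinite one — this is harmless since every summand is nonnegative.

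For part (a), I would note that $0\le\rho_n<1$ and $0\le\rho_m<1$ for all $m,n\ge 1$, so $\sum_{i=1}^k i\rho_n^i \le \sum_{i=1}^\infty i\rho_n^i = \rho_n/(1-\rho_n)^2$. Since $1-\rho_n = 1/n$, the right side equals $(1-\tfrac1n)n^2 = n^2-n < n^2$; the identical computation with $m$ in place of $n$ handles the second inequality. The degenerate cases $n=1$ (resp.\ $m=1$) give $\rho=0$, so the claim reads $0<1$.

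For part (b), I would first use $\log i < i$ for every $i\ge 1$ (the $i=1$ term is zero, and for the natural logarithm one even has $\log i\le i-1$), so that with $r\defeq\rho_n^{1/2}\in[0,1)$ one obtains $\sum_{i=1}^k i\rho_n^{i/2}\log i < \sum_{i=1}^\infty i^2 r^i = r(1+r)/(1-r)^3$. Then I would bound the numerator crudely by $r(1+r)<2$ and bound $1-r$ from below using the elementary inequality $\sqrt{1-t}\le 1-\tfrac t2$ on $[0,1]$ (proved by squaring) with $t=1/n$: this gives $1-r = 1-\sqrt{1-\tfrac1n}\ge \tfrac1{2n}$, hence $(1-r)^3\ge \tfrac1{8n^3}$ and $\sum_{i=1}^\infty i^2 r^i < 2\cdot 8n^3 = 16n^3$.

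I do not expect a genuine obstacle here. The only things to get right are recalling the two generating-function identities, the lower bound $1-\sqrt{1-1/n}\ge 1/(2n)$ that keeps $(1-r)^3$ away from zero, and the trivial $n=1$ / $m=1$ edge cases. The constant $16$ in (b) is deliberately generous, so the blunt estimates $\log i<i$ and $r(1+r)<2$ already suffice; if a sharper constant were ever needed one would simply retain the factor $r(1+r)$ instead of bounding it by $2$.
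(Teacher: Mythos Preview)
Your proposal is correct and follows essentially the same approach as the paper: both arguments pass to the infinite series and evaluate closed-form identities obtained by differentiating the geometric series, with the paper using $\log i < i+1$ and $\sum i(i+1)x^{i-1} = 2/(1-x)^3$ where you use $\log i < i$ and $\sum i^2 x^i = x(1+x)/(1-x)^3$. The only cosmetic difference in part~(b) is that the paper handles $(1-\sqrt{\rho_n})^{-3}$ by multiplying through by $(1+\sqrt{\rho_n})^3$ to get $(1-\rho_n)^{-3}=n^3$, whereas you bound $1-\sqrt{1-1/n}\ge 1/(2n)$ directly; both routes land on the same constant $16n^3$.
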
\label{app:bounded_seqs}

\begin{proof}

\textbf{a)} Note that since $\rho_n \in [0,1)$, $\sum_{i=1}^k i\rho_n^{i} \leq \sum_{i=1}^k i\rho_n^{i-1}$. Furthermore, 
\begin{align}
    \sum_{i=1}^k i\rho_n^{i-1} &\leq \sum_{i=1}^{\infty} i\rho_n^{i-1} = \sum_{i=1}^\infty \frac{\partial \rho_n^{i}}{\partial \rho_n} = \frac{\partial \sum\limits_{i=1}^\infty  \rho_n^{i}}{\partial \rho_n} = \frac{\partial \left[\frac{1}{1-\rho_n} - 1\right]}{\partial \rho_n} = \frac{1}{(1-\rho_n)^2} = n^2,
\end{align}
where the inequality comes from all terms being non-negative, and the second equality comes from the fact that the infinite sum exists for any $\rho_n \in (-1, 1)$ and is the Taylor series expansion of $\frac{1}{1-\rho_n}$. 

\medskip

\textbf{b)} Use the loose bound $\log i < i+1$ and the fact that $\sqrt{\rho_n} \in [0,1)$:
\begin{align}
    \sum_{i=1}^k i\rho_n^{i/2}\log i
    \leq \sum_{i=1}^\infty i\rho_n^{i/2}\log i &\leq \sum_{i=1}^\infty i (i+1) \sqrt{\rho_n}^{i-1} \\
    &= \frac{\partial^2\sum_{i=2}^\infty  \sqrt{\rho_n}^{i}}{\partial (\sqrt{p_n})^2} = \frac{\partial^2\frac{1}{1-\sqrt{\rho_n}} - \sqrt{\rho_n} - 1}{\partial (\sqrt{p_n})^2} = \frac{2}{(1 - \sqrt{\rho_n})^3}
\end{align}
where the inequalities and equalities follow the same reasoning as in point a). Further noting that 
\begin{align}
    \frac{2}{(1 - \sqrt{\rho_n})^3} =  \frac{2(1 +\sqrt{\rho_n} )^3}{(1 - \rho_n)^3} = 2n^3(\underbrace{1 +\sqrt{\rho_n}}_{\leq 2} )^3 \leq 16 n^3
\end{align}

\end{proof}

\section{Uniform Sparsest Cut Datasets}

\begin{table}[H]
    \centering
    \caption{Datasets used for Uniform Sparsest Cut experiments.
    ``Deg.'' stands for ``Degree,'' ``\# Constraints'' refers to the number of constraints in the SDP relaxation, and the dimension of the decision variable $w$ is $n\times n$.}
    \vspace{1.0em}  %
    \begin{tabular}{r|cccc|c}
        & $|V|=n$ & $|E|$ & Avg.\,Node\,Deg. & Max.\,Node\,Deg. & \# Constraints \\\midrule
        \texttt{mammalia-primate-association-13}     & $25$ & $181$ & $14$ & $19$ & $\approx 6.90\mathrm{e}{3}$ \\
        \texttt{insecta-ant-colony1-day37} & $55$ & $1\mathrm{e}{3}$ & $42$ & $53$ & $\approx 7.87\mathrm{e}{4}$ \\
        \texttt{insecta-ant-colony4-day10} & $102$ & $4\mathrm{e}{3}$ & $79$ & $99$ & $\approx 5.15\mathrm{e}{5}$ \\
    \end{tabular}
    \label{tab:usc_graphs}
\end{table} 
\end{document}